\documentclass{article}

\usepackage{ifthen}

\newboolean{anonymous}
\setboolean{anonymous}{false}  

\ifthenelse{\boolean{anonymous}}{}{\PassOptionsToPackage{preprint}{neurips_2026}}

\usepackage{neurips_2026}

\usepackage[utf8]{inputenc}
\usepackage[T1]{fontenc}
\usepackage{hyperref}
\usepackage{url}
\usepackage{booktabs}
\usepackage{amsfonts}
\usepackage{nicefrac}
\usepackage{microtype}
\usepackage{xcolor}

\usepackage{algorithm}
\usepackage{algpseudocode}
\usepackage{amsmath}
\usepackage{amssymb}
\usepackage{amsthm}
\usepackage[titletoc,title]{appendix}
\usepackage{bbm}
\usepackage{booktabs}
\usepackage{cancel}
\usepackage[labelfont=bf]{caption}
\usepackage{enumerate}
\usepackage{graphicx}
\usepackage{multirow}
\usepackage{multicol}
\usepackage{placeins}
\usepackage{romannum}
\usepackage{subcaption}
\usepackage{tabularx}
\usepackage{wrapfig}
\usepackage{xspace}
\usepackage{xcolor,colortbl}

\newlength{\algoboxheight}
\usepackage{siunitx}
\theoremstyle{plain}
\newtheorem{thm}{\protect\theoremname}
\providecommand{\theoremname}{Theorem}

\theoremstyle{plain}
\newtheorem{claim}{\protect\claimname}
\providecommand{\claimname}{Claim}

\theoremstyle{plain}

\providecommand{\lemmaname}{Lemma}

\theoremstyle{plain}
\newtheorem{crl}{\protect\corollaryname}[section]
\providecommand{\corollaryname}{Corollary}

\theoremstyle{definition}
\newtheorem{defn}{\protect\definitionname}
\providecommand{\definitionname}{Definition}

\theoremstyle{remark}
\newtheorem{assumption}{Assumption}

\theoremstyle{remark}
\newtheorem{rem}{Remark}

\theoremstyle{plain}
\newtheorem{manualtheoreminner}{Theorem}[thm]
\newenvironment{manualtheorem}[1]{%
  \renewcommand\themanualtheoreminner{#1}%
  \manualtheoreminner
}{\endmanualtheoreminner}

\theoremstyle{plain}

\theoremstyle{plain}

\theoremstyle{plain}

\providecommand{\theoremname}{Theorem}

\theoremstyle{plain}

\providecommand{\lemmaname}{Lemma}

\theoremstyle{plain}

\providecommand{\corollaryname}{Corollary}

\DeclareMathOperator*{\argmax}{arg\,max}

\newcommand{\bydef}{\triangleq}

\newcommand{\CondCurlyBrackets}[1]{\expandafter\ifx\expandafter\relax\detokenize{#1}\relax\else\left(#1\right)\fi}
\newcommand{\CondRectBrackets}[1]{\expandafter\ifx\expandafter\relax\detokenize{#1}\relax\else[#1]\fi}
\newcommand{\CondRectBracketsHigh}[1]{\expandafter\ifx\expandafter\relax\detokenize{#1}\relax\else\left[#1\right]\fi}
\newcommand{\CondColon}[1]{\expandafter\ifx\expandafter\relax\detokenize{#1}\relax\else:{#1}\fi}

\newcommand{\Expt}{\mathbb{E}}

\newcommand{\ExptFlat}[3]{\ensuremath{\Expt_{#1}^{#2}\CondRectBrackets{#3}}}

\newcommand{\R}{\mathbb{R}}
\newcommand{\N}{\mathbb{N}}
\newcommand{\E}{\mathbb{E}}
\newcommand{\Prob}{\mathbb{P}}
\newcommand{\Sspace}{\mathcal{S}}
\newcommand{\A}{\mathcal{A}}

\newcommand{\abs}[1]{|#1|}
\newcommand{\norm}[1]{\Vert#1\Vert}
\newcommand{\set}[1]{\{#1\}}
\newcommand{\bracks}[1]{[#1]}
\newcommand{\parens}[1]{(#1)}

\newcommand{\givenflat}{\,|\,}

\newcommand{\dist}{\operatorname{d}}
\newcommand{\argmaxset}{\operatorname*{Argmax}}

\newcommand{\loc}{\mathrm{loc}}

\definecolor{gssactioncolor}{HTML}{1F4E8C}
\definecolor{gssstatecolor}{HTML}{1B9E77}
\definecolor{gsstailcolor}{HTML}{E69F00}
\definecolor{gssscorecolor}{HTML}{D81B60}
\newcommand{\GSSAction}[1]{\textcolor{gssactioncolor}{#1}}
\newcommand{\GSSState}[1]{\textcolor{gssstatecolor}{#1}}
\newcommand{\GSSTail}[1]{\textcolor{gsstailcolor}{#1}}
\newcommand{\GSSScore}[1]{\textcolor{gssscorecolor}{#1}}

\makeatletter
\@ifundefined{linenomath*}{
  \newenvironment{linenomath*}{}{}
}{}
\makeatother

\ifthenelse{\boolean{anonymous}}{\input{commands_special.tex}}{}

\allowdisplaybreaks[4]
\title{Graph Sparse Sampling: Breaking the Curse\\of the Horizon in Continuous MDP Planning}

\ifthenelse{\boolean{anonymous}}
{
    \author{%
      Anonymous Authors \\
      Anonymous Institution \\
      \texttt{anonymous@anonymous.edu}
    }
}
{
    \author{%
      Idan Lev-Yehudi$^1$ \quad Vadim Indelman$^{2,3}$ \\
      $^1$Technion Autonomous Systems Program (TASP), Technion {--} Israel Institute of Technology\\
      $^2$Stephen B. Klein Faculty of Aerospace Engineering, Technion {--} Israel Institute of Technology\\
      $^3$Faculty of Data and Decision Sciences, Technion {--} Israel Institute of Technology\\
      \texttt{idanlev@campus.technion.ac.il, vadim.indelman@technion.ac.il}
    }
}

\begin{document}

\pagenumbering{arabic}
\maketitle
\vspace{-0.2em}

\begin{abstract} 
    Planning under uncertainty in continuous domains is essential for autonomous systems, yet computationally demanding. Tree-based search methods such as Monte Carlo Tree Search (MCTS) remain popular, but their branching structure can require sampling budgets that grow exponentially with lookahead depth in the worst case. From a tree perspective, continuous state or action spaces become especially challenging, since the planner must decide where to search in an infinite branching hierarchy. We propose Graph Sparse Sampling (GSS), an online planning algorithm that shares sampled futures across many candidate decisions, rather than sampling separate successors for each candidate action. This branch-free graph exposes large GPU-friendly batches, while using heuristics to focus computation. We prove finite-sample performance guarantees for GSS covering full-rank or low-rank generative simulators via smoothed backups, and discrete or sampled continuous action spaces. Under suitable overlap, regularity, and action-coverage conditions, these bounds have polynomial dependence on the planning horizon, formalizing when shared futures can avoid the exponential horizon dependence of tree-shaped sparse sampling. We demonstrate continuous-control simulations where GSS substantially outperforms tree-based planners on long horizons or achieves near-optimal performance, supporting no-branching graph planning as a complementary design principle for online control.
\end{abstract}

\section{Introduction}

Planning under uncertainty is a fundamental problem in artificial intelligence, and continuous Markov Decision Processes (MDPs) model many physical systems.
For discrete MDPs, exact policy computation is computationally constrained: finite-horizon, discounted, and average-cost variants are polynomial-time solvable but P-complete~\citep{Papadimitriou87math}.
Continuous MDPs add a difficulty, since neither the state space nor the action space can be enumerated directly. 
Sparse Sampling~\citep{Kearns02jml} addresses large state spaces by using only a generative model to sample a sparse set of successor states at each state-action pair, yielding online planning bounds independent of the state dimension.
However, these are exponential in the effective planning horizon, and worst-case tight when only a black-box simulator is available~\citep{Kearns02jml}.

Most practical continuous MDP planners build a search tree, and adapt how the tree is widened or how actions are selected.
Double Progressive Widening (DPW)~\citep{Couetoux11iclio} limits the number of state and action children in continuous Monte Carlo Tree Search; KR-UCT~\citep{Yee16ijcai} shares value estimates between nearby continuous actions;
Voronoi Optimistic Optimization (VOO) and VOOT~\citep{Kim20aaai} use Voronoi-based black-box optimization inside tree search;
and VG-UCT~\citep{Lee20aaai} refines sampled actions using value gradients.
Other non-tree approaches include Model Predictive Path Integral control (MPPI)~\citep{Williams17jgcd}, which optimizes sampled open-loop trajectories with importance-weighted path costs, and stochastic mesh method (SMM)~\citep{Broadie04jcf} and weighted stochastic mesh (WSM)~\citep{Belomestny20mf,Belomestny25joc}, which reuse sampled states in problems such as high-dimensional option pricing and general Markov decision processes.
These methods provide important ways to cope with continuous domains, but they either retain a tree-shaped planning object, optimize trajectories rather than shared Bellman layers, or are not designed around fixed-shape batched online control on GPUs.

We propose Graph Sparse Sampling (GSS), an online planner that replaces separate sampled subtrees with shared successor layers.
Rather than drawing a different successor set for every state-action pair, GSS samples a common next-state layer and evaluates many candidate decisions against it.
We design GSS to trade per-sample adaptivity for sampling throughput, so that GSS can scale to far larger sampling budgets in continuous MDPs.
Our theoretical results formalize this intuition by showing that under suitable conditions, GSS can achieve polynomial error sample complexity in the horizon,
while our experiments show it is effective in practice, reaching far higher sample counts than MCTS-based planners.

\subsection{Main Contributions}

\textbf{Algorithmic contribution.} We present Graph Sparse Sampling (GSS), a novel planning algorithm for continuous MDPs designed around state sample reuse in predictable shape operations.
GSS is built for parallel processing on modern GPUs, and incorporates action and state sampling heuristics that focus budgets on more promising regions of the state and action spaces.

\textbf{Theoretical contribution.} We provide a general derivation of finite-time, high-probability guarantees of action-value estimates that hold jointly for the entire sampled planning graph, with a polynomial dependence on the horizon.
These hold under conditions of density-ratio overlap, backup-stability, and action-coverage assumptions, that are comparable to previous results under different conditions.
Furthermore, we show extensions of the bounds to continuous action spaces, and to low-rank transition models of black-box simulators, under suitable conditions.

\textbf{Empirical contribution.} We demonstrate GSS in three continuous-control benchmarks. We compare it to tree-based planners in similar or extended time budgets, and to closed-loop analytic baselines. GSS is competitive with or improves over tree-based planners in the tested regimes, often at smaller measured planning times, and scales to larger batched sample counts.
Furthermore, we show that GSS scales to high-dimensional and long-horizon settings in a physical simulation.

\subsection{Related Work}

\textbf{Tree-based online planning with guarantees.}
Sparse Sampling~\citep{Kearns02jml} and UCT~\citep{Kocsis06ecml} are classical tree-based baselines with finite-sample or asymptotic guarantees in discrete-action MDPs.
Continuous-domain tree planners add mechanisms for action and state expansion, including DPW~\citep{Couetoux11iclio,Auger13Sp}, kernel regression~\citep{Yee16ijcai}, Voronoi optimization~\citep{Kim20aaai,Lim21cdc}, and value gradients~\citep{Lee20aaai}.
Non-asymptotic theory for MCTS has been developed for finite or discrete-action settings \citep{Shah22or,Barenboim26aij}, and in continuous domains, for tree-based optimistic partitioning methods \citep{Mao2020neurips}. These methods remain tree-shaped and do not analyze shared successor layers or SNIS-style graph backups.

\textbf{Graph-based online planning and sample reuse.}
\citet{Leurent20acml} consider Monte Carlo Graph Search (MCGS), but in discrete MDPs, for repeating state samples.
\citet{Kujanpaa24aamas} considered a graph-based planner for continuous MDPs, but without formal guarantees.
\citet{LevYehudi25arxiv} make reuse of state sample values via IS in a tree setting.
Stochastic Mesh Method (SMM) \citep{Broadie04jcf} and Weighted Stochastic Mesh (WSM) \citep{Belomestny20mf,Belomestny25joc} are the closest methods to GSS in the literature, and \citet{Belomestny25joc} prove finite-time bounds for WSM with polynomial dependence on the horizon.
Its compact-state tractability result relies on global density lower/upper bounds and regularity assumptions, whereas our SNIS analysis is stated in proposal-overlap terms through R{\'e}nyi divergences, which can be considered as more general or not directly comparable. Furthermore, we incorporate search heuristics for action and state sampling, steering the algorithm towards closed-loop online planning on GPU hardware.

\section{Background}

\textbf{Markov Decision Process (MDP):}
we consider MDPs in the form $\langle \mathcal{S},\mathcal{A},p,r,\gamma,L\rangle$.
$\mathcal{S}\subseteq\mathbb{R}^{n_s},\mathcal{A}\subseteq\mathbb{R}^{n_a}$ are the state and action spaces.
We use primes for next-step variables: the transition density $p_{t}(s^{\prime}{\mid} s,a)$ describes the next state after taking action $a\in\mathcal{A}$ at state $s\in\mathcal{S}$ at time $t$.
We use the shorthand $s^{\prime}\sim p_{t}(\cdot\mid s,a)$ to mean that $s^{\prime}$ is sampled from the distribution with this density, and use the same convention for other densities.
The reward function $r_t(s,a,s^\prime)\in\mathbb{R}$ satisfies $\abs{r_t(s,a,s^\prime)}\le R_{\max}$, gives the immediate reward of transitioning from state $s$ to $s^\prime$ by taking action $a$ at time $t$, and the expected reward is $r_t(s,a)\bydef \mathbb{E}_{s^\prime{\mid} s,a}[r_t(s,a,s^\prime)]$.
$\gamma\in(0,1]$ is the discount factor.
The MDP starts at time $0$ and terminates after $L\in\mathbb{N}\cup\{\infty\}$ steps, and if $L=\infty$ then we assume $\gamma < 1$.
We consider (possibly time-dependent) deterministic policies $\pi=(\pi_t)_{t\in\mathbb{N}}$, i.e. $\pi_t:\Sspace\to\A$, yet we note that our theoretical results can generalize to stochastic policies as well.
The value function of a policy $\pi$ at time $t$ is
$V_{t}^{\pi}(s_{t})\bydef \ExptFlat{s_{t+1:L}{\mid} s_{t},\pi}{}{\sum_{i=t}^{L-1}\gamma^{i-t}r_i(s_{i},\pi_{i}(s_{i}),s_{i+1})}$, where we use $m\,{:}\,n$ to denote the range $m,\ldots,n\in\mathbb{Z}$.
For any mapping $U:\Sspace\to\R$, define the single-sample Bellman target
$G_t^U(s,a,s')\bydef r_t(s,a,s')+\gamma U(s')$.
We define the action-value function as
$Q_{t}^{\pi}(s,a)\bydef \mathbb{E}_{s^\prime\sim p_t(\cdot\mid s,a)}[G_t^{V_{t+1}^{\pi}}(s,a,s^\prime)]$.
The optimal value and action-value functions are
$V_t^*(s)\bydef \sup_{\pi}V_t^\pi(s)$ and
$Q_t^*(s,a)\bydef \mathbb{E}_{s^\prime\sim p_t(\cdot\mid s,a)}[G_t^{V_{t+1}^{*}}(s,a,s^\prime)]$ respectively, and we denote $G_t^*(s,a,s') \bydef G_t^{V_{t+1}^*}(s,a,s')$.
We write $\argmaxset_t(s)\bydef\argmax_{a\in\A}Q_t^*(s,a)$ for the exact optimal-action set.
Our planning goal is to return a near-optimal root action at $s_0$.

\textbf{Importance Sampling (IS):} IS~\citep{Kloek78econometrica} is a technique in Monte Carlo (MC) estimation where a proposal distribution $q$ is used to generate samples. We use $\hat{\boxdot}$ for empirical estimators and reserve $\tilde{\boxdot}$ for approximate theoretical quantities. The IS estimator for $g(x)=\mathbb{E}_{x\sim p}[f(x)]$ is $\hat{g}_{q,\mathrm{IS}}=N^{-1} \sum_{i=1}^{N}\rho^{p}_{q}(x^{i})f(x^{i})$, for importance ratios $\rho^{p}_{q}(x) \bydef p(x) \slash q(x)$.
$\hat{g}_{q,\mathrm{IS}}$ is unbiased if $q(x)=0$ implies $p(x)=0$.
Often, the self-normalized IS (SNIS) estimator is formulated when we only have access to an unnormalized version of $p$, or for variance reduction purposes, and is given by $\hat{g}_{q,\mathrm{SNIS}}=(\sum_{i=1}^{N}\rho^{p}_{q}(x^{i}))^{-1} \sum_{i=1}^{N}\rho^{p}_{q}(x^{i})f(x^{i})$.
While biased, under weak assumptions it is consistent~\citep[9.2]{Owen13book}.

\begin{figure}[t]
    \centering
    \includegraphics[width=0.8\columnwidth]{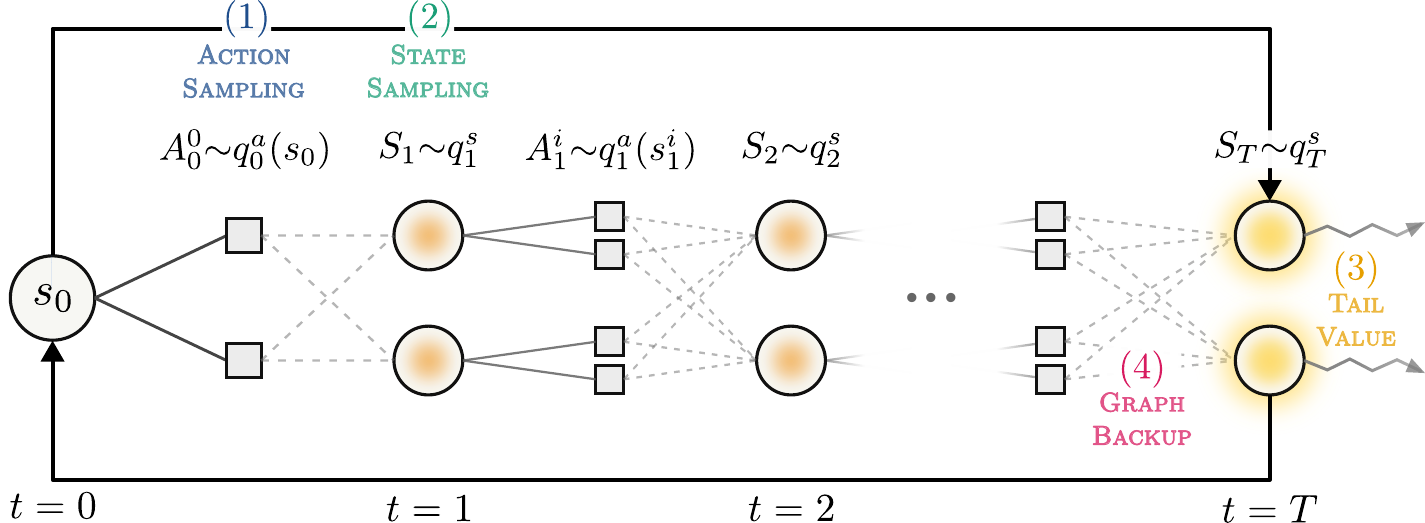}
    \caption{
    GSS builds a layered planning graph from the root state $s_0$.
    Sampled states are shown as circles and actions as rectangles.
    In the forward pass, we draw candidate actions for each state using the (1) \GSSAction{action proposal}; the next shared state layer is then drawn by the (2) \GSSState{state proposal}.
    At depth $T$, nodes are evaluated by a (3) \GSSTail{tail value}.
    The backward pass applies the (4) \GSSScore{graph backup} to each node-action pair, backpropagating the best action-value estimates at each node.
    }
    \label{fig:gss_planning_graph}
\end{figure}

\textbf{Monte Carlo Tree Search (MCTS):} MCTS is an algorithm used to quickly explore large state spaces~\citep{Browne12ieee}. It iteratively repeats four steps to build a search tree that approximates the action-values, using a best-first strategy.
Often UCT~\citep{Kocsis06ecml} is used to balance between exploration of new actions and exploitation of promising ones.
Double Progressive Widening (DPW)~\citep{Couetoux11iclio} is a technique to limit the branching factor from being infinite in continuous settings.
The number of children of a node is artificially limited to $kN^\alpha$ for $N$ visitations, for fixed $k>0$ and $0<\alpha<1$.
VOO samples continuous actions by implicitly using Voronoi cells around evaluated actions to balance local and global search~\citep{Kim20aaai}.
VPW uses VOO when progressive widening adds new actions, extending it to stochastic tree search in continuous or hybrid domains~\citep{Lim21cdc}.

\section{Graph Sparse Sampling (GSS)}
\label{sec:gss}

\subsection{Layered Sparse Graph Structure}

GSS is a finite-lookahead planner from a root state $s_{0}\in\Sspace$, attempting to estimate $\pi_{0}^{*}(s_{0})$.
GSS operates by drawing a known number of state and action samples from time $0$ until its terminal depth $T$.
We design GSS around decoupling successor-state sampling from action-value estimation, separated into forward pass and backward pass procedures respectively.
This decoupling allows GSS to achieve large, regular batches of operations that can be parallelized efficiently on modern GPUs.
Algorithm~\ref{alg:gss} shows pseudocode for GSS, in which each line is a batched operation.

\begin{algorithm}[t]
\caption{Graph Sparse Sampling}
\label{alg:gss}
\begin{algorithmic}[1]
\State Set $s_{0}^{1}\gets s_{0}$ and $S_{0}\gets\{s_{0}^{1}\}$.
\Procedure{EvaluateLayer}{$t,S_{t}$}
    \State \textbf{if} $t=T$, set $\hat{V}_{T}^{j}\gets \GSSTail{\mathrm{TailValue}_{T}}(s_{T}^{j})$ for all $j\in 1{:}C_{T}$ and \textbf{return}
    \State For all $i\in 1{:}C_{t}$, draw $a_{t}^{i,1{:}K_{t}}\sim \GSSAction{q_{t}^{a}}(\cdot\mid s_{t}^{i})$, set $A_{t}\gets\cup_{i\in 1{:}C_{t}}\{a_{t}^{i,1{:}K_{t}}\}$
    \State Fit $q_{t}^{s}\gets \GSSState{\mathrm{FitProposal}_{t}}(t,S_{0:t},A_{0:t})$ on all available information
    \State Draw $s_{t+1}^{1{:}C_{t+1}}\sim q_{t}^{s}$ and set $S_{t+1}\gets\{s_{t+1}^{1{:}C_{t+1}}\}$
    \State \Call{EvaluateLayer}{$t+1,S_{t+1}$}
    \State For all $i\in 1{:}C_t$ and $k\in 1{:}K_t$, set
    $\hat{Q}_{t}^{i,k}\gets \GSSScore{\mathsf{B}_{t}}\parens{s_{t}^{i},a_{t}^{i,k};S_{t+1},\hat{V}_{t+1}^{1{:}C_{t+1}},q_{t}^{s}}$
    \State For all $i\in 1{:}C_{t}$, set $\hat{V}_{t}^{i}\gets \max_{k\in 1{:}K_{t}}\hat{Q}_{t}^{i,k}$
    \State \textbf{if} $t=0$ \textbf{return} $a_{0}^{1,\hat{k}_{0}}$ for $\hat{k}_{0}\in\argmax_{k\in 1{:}K_{0}}\hat{Q}_{0}^{1,k}$
\EndProcedure
\State \Return \Call{EvaluateLayer}{$0,S_{0}$}.
\end{algorithmic}
\end{algorithm}

\textbf{Forward pass.}
The forward pass alternates between action sampling and state sampling.
Starting from the initial state $s_{0}$, let $C_{0:T}\in\N^{T+1}$ be the state layer widths, denoting the $i$'th state layer as $S_{t}=\{s_{t}^{1{:}C_{t}}\}$.
Each state node $s_t$ has an action-sampling budget of 
$K_{t}$ candidate actions, resulting in a total of $C_{t}K_{t}$ candidate actions at depth $t$.

For action sampling, GSS uses an \GSSAction{action proposal} density over $\A$, denoted by $\GSSAction{q_{t}^{a}}(\cdot\mid s)$, in each state $s_{t}^{i}$ to draw $a_{t}^{i,1{:}K_{t}}\overset{\mathrm{i.i.d.}}{\sim}\GSSAction{q_{t}^{a}}(\cdot\mid s_{t}^{i})$.
For state sampling, a \GSSState{state proposal} routine chooses a density $q_{t}^{s}\gets\GSSState{\mathrm{FitProposal}_{t}}(t,S_{0:t},A_{0:t})$ on $\Sspace$, and GSS samples $s_{t+1}^{1{:}C_{t+1}}\overset{\mathrm{i.i.d.}}{\sim}q_{t}^{s}$.
We denote the sets $A_{t}^{i}\bydef\{a_{t}^{i,1{:}K_{t}}\}$, $A_{t}\bydef\cup_{i\in 1{:}C_{t}}A_{t}^{i}$, and $S_{t}=\{s_{t}^{1{:}C_{t}}\}$.
At the terminal layer, GSS initializes
$\hat{V}_{T}^{j} \bydef \GSSTail{\mathrm{TailValue}_{T}}(s_{T}^{j})$
for $j\in 1{:}C_T$.
The components $\GSSAction{q_{t}^{a}}$, $\GSSState{\mathrm{FitProposal}_{t}}$ and $\GSSTail{\mathrm{TailValue}_{T}}$ may be tuned to incorporate domain heuristics, including rollout-based or learned heuristics.

\textbf{Backward pass.}
The backward pass proceeds from $t=T-1$ down to $t=0$, alternating between a \GSSScore{graph backup} and a max-reduction at each layer.
\GSSScore{Graph backup} sets the action-value estimates for each $i\in 1{:}C_t$ and $k\in 1{:}K_t$ via \footnote{While we could backup $\hat{Q}_{t}^{i,k}$ over a sparse subset $E_{t}^{i,k} \subseteq S_{t+1}$ to reduce the time complexity below $O(C_{t} \cdot A_{t} \cdot C_{t+1})$, we find that the dense assignment $E_{t}^{i,k}=S_{t+1}$ is both simpler in theory and runs faster for simple GPU implementations.}$\hat{Q}_{t}^{i,k} \bydef \GSSScore{\mathsf{B}_{t}}\parens{s_{t}^{i},a_{t}^{i,k};S_{t+1},\hat{V}_{t+1}^{1{:}C_{t+1}},q_{t}^{s}}\in\mathbb{R}$, given the next state layer $S_{t+1}$, its value estimates $\hat{V}_{t+1}^{1{:}C_{t+1}}$, and the state proposal density $q_{t}^{s}$.
The node value estimate is the maximum over sampled actions, i.e.
$\hat{V}_{t}^{i} \bydef \max_{k\in 1{:}K_{t}}\hat{Q}_{t}^{i,k}$.
At the root, the returned action is $a_0^{1,\hat{k}_0}$ where $\hat{k}_{0}\in\argmax_{k\in 1{:}K_{0}}\hat{Q}_{0}^{1,k}$, which is then executed by the online planner before the next planning episode. We denote the short-hand Bellman target estimates,
$\hat G_t^{i,k,j}\bydef r_t(s_t^i,a_t^{i,k},s_{t+1}^j)+\gamma \hat V_{t+1}^j$, and give example backups:
\vspace{-1.6em}
\begingroup
\begin{subequations}
\label{eq:gss-backup-instances}
\renewcommand{\theequation}{\theparentequation.\Alph{equation}}
\small
\begin{center}
\begin{minipage}[c]{0.40\linewidth}
\begin{equation}
\label{eq:gss-backup-snis}
\begin{aligned}
\hat{Q}_{t,\mathrm{SNIS}}^{i,k}
=
\eta_{\rho}^{-1} \sum_{j=1}^{C_{t+1}}\rho_{t}^{i,k}(j)\hat G_t^{i,k,j}
\end{aligned}
\end{equation}
\end{minipage}
\begin{minipage}[c]{0.58\linewidth}
\begin{equation}
\label{eq:gss-backup-kde}
\begin{aligned}
\hat{Q}_{t,\mathrm{KDE}}^{i,k}
=
\frac{\sum_{j=1}^{C_{t+1}} w_{t,h}^{i,k}(j)\hat G_t^{i,k,j}}
{\sum_{j=1}^{C_{t+1}} w_{t,h}^{i,k}(j)}
\end{aligned}
\end{equation}
\end{minipage}
\\
\begin{minipage}[c]{0.48\linewidth}
\begin{equation}
\label{eq:gss-backup-nn}
\begin{aligned}
\hat{Q}_{t,\mathrm{NN}}^{i,k}
=
r_{t}(s_{t}^{i},a_{t}^{i,k},\tilde{s}_{t+1}^{i,k})
{}+\gamma \hat V_{t+1}^{j_{\mathrm{NN}}}
\end{aligned}
\end{equation}
\end{minipage}
\begin{minipage}[c]{0.5\linewidth}
\begin{equation}
\label{eq:gss-backup-learned}
\begin{aligned}
\hat{Q}_{t,\mathrm{learn}}^{i,k}
=
\mathsf{F}_{t,\phi}(s_{t}^{i},a_{t}^{i,k};\hat{V}_{t+1}^{1{:}C_{t+1}},\mathcal D_{t})
\end{aligned}
\end{equation}
\end{minipage}
\end{center}
\end{subequations}
\endgroup
For SNIS~\eqref{eq:gss-backup-snis}, $\rho_{t}^{i,k}(j)\bydef p_{t}(s_{t+1}^{j}\mid s_{t}^{i},a_{t}^{i,k})/q_{t}^{s}(s_{t+1}^{j})$ and $\eta_{\rho} = \sum_{j=1}^{C_{t+1}}\rho_{t}^{i,k}(j)$.
When exact transition densities are available, the SNIS backup~\eqref{eq:gss-backup-snis} is a convenient choice for GSS. As we show in Section~\ref{sec:theory}, it allows to derive finite-time performance guarantees with a polynomial error sample complexity in the horizon in Corollary~\ref{crl:gss_sample_complexity}, and this is the version of GSS we use in our experiments.

When importance weights are unavailable or impractical to use online, one can use simulator draws in several ways.
KDE approximation~\eqref{eq:gss-backup-kde}~\citep{Wand94book} draws $x_{t+1}^{i,k,1:M}\sim p_{t}(\cdot\mid s_{t}^{i},a_{t}^{i,k})$ and forms $\hat{p}_{t,h}^{i,k}(s)\bydef M^{-1}\sum_{\ell=1}^{M}K_{h}(s-x_{t+1}^{i,k,\ell})$; in~\eqref{eq:gss-backup-kde}, $w_{t,h}^{i,k}(j)\bydef \hat{p}_{t,h}^{i,k}(s_{t+1}^{j})/q_{t}^{s}(s_{t+1}^{j})$ are the resulting SNIS weights.
Nearest-neighbor approximations~\eqref{eq:gss-backup-nn} avoid density estimation completely by drawing $\tilde{s}_{t+1}^{i,k}\sim p_{t}(\cdot\mid s_{t}^{i},a_{t}^{i,k})$ and setting $j_{\mathrm{NN}}\in\operatorname*{argmin}_{j\in 1{:}C_{t+1}}d(\tilde{s}_{t+1}^{i,k},s_{t+1}^{j})$.
In general, any learned approximation may be used as in~\eqref{eq:gss-backup-learned}, conditioning the action-value estimates with offline and online available data $\mathcal D_{t}$.

\subsection{Smoothed SNIS Graph Backups for Low-Rank Generative Simulators}
\label{subsec:gss-low-rank-smoothing}

In many continuous simulators, the transition model is only given by the generative process
\begin{equation}
\label{eq:gss-low-rank-simulator}
s^\prime=f_t(s,a,\xi),
\qquad
\xi\sim p_{\xi,t}(\cdot\mid s,a),
\end{equation}
with $\xi\in\mathbb{R}^{n_\xi}$ and $s,s' \in \mathbb{R}^{n_s}$.
When $n_\xi < n_s$, i.e. the latent noise dimension is smaller than the state dimension, the map $\xi\mapsto f_{t}(s,a,\xi)$ produces a transition model that is generally singular w.r.t. Lebesgue measure, making the importance ratio $p_{t}(\cdot\mid s,a)/q_{t}^{s}$ not well-defined.
Kernel smoothing~\citep{Wand94book} addresses this by convolving the target transition density with a kernel density in $\Sspace$, obtaining a density that is nondegenerate w.r.t. Lebesgue measure:
\begin{equation}
\label{eq:gss-smoothed-density}
p_{t,\tau_{t}}(x\mid s,a)
\bydef
\int k_{t,\tau_{t}}\parens{x\mid f_{t}(s,a,\xi)}\,p_{\xi,t}(d\xi\mid s,a),
\end{equation}
where $k_{t,\tau_{t}}(x\mid y)$ is a smoothing density on $\Sspace$ with bandwidth parameter $\tau_{t}>0$, measurable in $y$, that is locally concentrated around $y$ as $\tau_{t}\to 0$.
We refer to smoothed SNIS backups as those that use the SNIS formula~\eqref{eq:gss-backup-snis} with $p_{t,\tau_{t}}$ in place of $p_{t}$ in the importance ratio.

We can control the incurred bias by tuning the bandwidth $\tau_{t}$, as shown in the following claim.
\begin{claim}[Smoothing bias]
\label{clm:smoothing-bias}
Assume $G_t^*(s,a,\cdot):\Sspace\to\R$ is measurable for $(s,a)\in\Sspace\times\A$, and $\beta_t^s$-H\"older with constant $L_t^s$, i.e.
$
\abs{G_t^*(s,a,x)-G_t^*(s,a,y)}
\le
L_t^s\dist(x,y)^{\beta_t^s}
$ for all $x,y\in\Sspace$. Then:
\begin{align}
&\abs{\E_{s'\sim p_{t}(\cdot\mid s,a)}\bracks{G_t^*(s,a,s')}
-
\E_{x\sim p_{t,\tau_{t}}(\cdot\mid s,a)}\bracks{G_t^*(s,a,x)}}
\le
L_t^s\,m_{\beta_t^s,t}(\tau_{t}),
\label{eq:smoothing-bias-bound}
\\
&\textnormal{where } \qquad m_{\beta_t^s,t}(\tau_{t}) \bydef \sup_{y}\int_{\Sspace}\dist(x,y)^{\beta_t^s} k_{t,\tau_{t}}(x\mid y)\,dx.
\label{eq:smoothing-moment}
\end{align}
\end{claim}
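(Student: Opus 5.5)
The plan is to write both expectations as integrals against the latent noise law $p_{\xi,t}(\cdot\mid s,a)$, then compare them pointwise in $\xi$. Fix $(s,a)$ and abbreviate $g(x)\bydef G_t^*(s,a,x)$. By the generative representation~\eqref{eq:gss-low-rank-simulator}, the first expectation is
\[
\E_{s'\sim p_t(\cdot\mid s,a)}[g(s')]=\int g\parens{f_t(s,a,\xi)}\,p_{\xi,t}(d\xi\mid s,a).
\]
By the definition~\eqref{eq:gss-smoothed-density} and Fubini--Tonelli, the second is
\[
\E_{x\sim p_{t,\tau_t}(\cdot\mid s,a)}[g(x)]=\int\Big[\int_{\Sspace} g(x)\,k_{t,\tau_t}\parens{x\mid f_t(s,a,\xi)}\,dx\Big]\,p_{\xi,t}(d\xi\mid s,a).
\]
Fubini applies because $k_{t,\tau_t}(x\mid y)$ is jointly measurable in $(x,y)$ and $g$ is measurable by assumption. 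Integrability of $g$ under $p_{t,\tau_t}$ holds as long as the mean-difference bound below is finite: $\abs{g(x)}\le\abs{g(y)}+L_t^s\dist(x,y)^{\beta_t^s}$, and $g$ is bounded at points $y$ in the support of $p_t$. If $m_{\beta_t^s,t}(\tau_t)=\infty$, the claim is trivial.

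Next, I use the fact that each $k_{t,\tau_t}(\cdot\mid y)$ is a probability density on $\Sspace$. This lets me write $g(y)=\int g(y)\,k_{t,\tau_t}(x\mid y)\,dx$ with $y=f_t(s,a,\xi)$. Subtracting the two representations gives
\[
\int\int_{\Sspace}\parens{g(y)-g(x)}\,k_{t,\tau_t}(x\mid y)\,dx\,p_{\xi,t}(d\xi\mid s,a).
\]
Moving the absolute value inside and applying the H\"older assumption bounds the inner integral by
\[
L_t^s\int_{\Sspace}\dist(x,y)^{\beta_t^s}k_{t,\tau_t}(x\mid y)\,dx\le L_t^s\,m_{\beta_t^s,t}(\tau_t),
\]
uniformly in $y$, by the definition~\eqref{eq:smoothing-moment}. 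Integrating this constant against the probability measure $p_{\xi,t}$ yields~\eqref{eq:smoothing-bias-bound}.

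The main point needing care is the measure-theoretic justification of the interchange of integrals and of the finiteness of the expectations. This is the step I would expect to be the only real obstacle, since the rest is a one-line H\"older estimate. I would handle it by first applying Tonelli to $\abs{g(y)-g(x)}k_{t,\tau_t}(x\mid y)$, which is nonnegative, to obtain the finite bound above. That bound then gives the integrability needed to apply Fubini to the signed difference.
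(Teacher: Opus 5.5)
Your proposal is correct and matches the paper's proof. The paper likewise writes the smoothed law as that of $X\sim k_{t,\tau_t}(\cdot\mid Y)$ with $Y\sim p_t(\cdot\mid s,a)$, expresses the difference as $\int\int (G_t^*(s,a,y)-G_t^*(s,a,x))k_{t,\tau_t}(x\mid y)\,dx\,p_t(dy\mid s,a)$, and then applies the triangle inequality, the H\"older bound, and the supremum over $y$. The only differences are that you integrate against the noise law $p_{\xi,t}$ rather than its pushforward $p_t(dy\mid s,a)$, and that you spell out the Tonelli/Fubini justification the paper omits; integrability is in any case immediate, since $G_t^*$ is bounded when $\abs{r_t}\le R_{\max}$.
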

The proof is in Appendix~\ref{app:proof-smoothing-bias}.
The integral~\eqref{eq:gss-smoothed-density} is often unavailable in closed form, but can be approximated by various methods.
In our implementations, we evaluate the linearized approximation around a chosen noise point $\bar\xi$ (often the mean of $p_{\xi,t}$):
\begin{align}
\label{eq:gss-linearized-smoothed-density}
\tilde{p}_{t,\tau_{t}}^{\mathrm{lin}}(x\mid s,a)
\bydef
\int k_{t,\tau_{t}}(x\mid f_{t}(s,a,\bar\xi)+J_{\xi,t}^{s,a}(\xi-\bar\xi))
\,p_{\xi,t}(d\xi\mid s,a),
\quad
J_{\xi,t}^{s,a}
\bydef
D_\xi f_{t}(s,a,\bar\xi).
\end{align}
When both $p_{\xi,t}\sim\mathcal{N}(\mu_{\xi,t},\Sigma_{\xi,t})$ and $k_{t,\tau_{t}}\sim\mathcal{N}(0,\Sigma_{k,t})$ are Gaussian, this approximation is Gaussian with mean $f_{t}(s,a,\bar\xi)+J_{\xi,t}^{s,a}(\mu_{\xi,t}-\bar\xi)$ and covariance
$J_{\xi,t}^{s,a}\Sigma_{\xi,t}(J_{\xi,t}^{s,a})^\top + \Sigma_{k,t}$, which yields a closed-form density evaluation.
Other approximations such as sigma-point transformations~\citep{Julier04ieee} or Monte Carlo density estimates~\citep{Owen13book} may be used in various settings.

\section{GSS Theoretical Analysis}
\label{sec:theory}

\subsection{Settings and Assumptions}

We first isolate the action-side approximation introduced by replacing the full action space by nodewise sampled action sets.
\begin{defn}[Max-close action sets]
\label{def:max-close-action-sets}
For each depth $t<T$ and realized node $s_t^i$, we say that the action set $A_t^i$ is \emph{max-close} with radius $\Gamma_t\ge0$ and local failure probability $\delta_t^a\in[0,1]$ if
\begin{equation}
\label{eq:action-max-close}
0
\le
V_t^*(s_t^i)-\max_{a\in A_t^i}Q_t^*(s_t^i,a)
\le
\Gamma_t
\end{equation}
except with probability at most $\delta_t^a$.
If $\A$ is finite and $A_t^i=\A$, then \eqref{eq:action-max-close} holds with $\Gamma_t=0$ and $\delta_t^a=0$.
\end{defn}
When $\A$ is metric and the action sets $A_t^i$ are sampled from $\GSSAction{q_t^a}$, the following assumptions yield explicit $\Gamma_t$ and $\delta_t^a$.

\begin{assumption}[Proposal small-ball coverage]
\label{ass:small-ball}
For each depth $t<T$, there is a radius cap $\bar\varepsilon_{t}>0$ such that for every $\varepsilon\in(0,\bar\varepsilon_{t}]$ and every $s\in\Sspace$, there exists $a_{t,\varepsilon}^{\star}(s)\in\argmaxset_t(s)$ such that
$\GSSAction{q_{t}^{a}} (
\{a\in\A:d_{\A}(a,a_{t,\varepsilon}^{\star}(s)) <\varepsilon\}
\givenflat s)
\ge
m_{t}(\varepsilon)$
with $m_{t}(\varepsilon)>0$.
\end{assumption}

\begin{assumption}[Action-side H\"older regularity]
\label{ass:holder-action}
For each depth $t<T$, there are constants $L_t^a\ge0$ and $\beta_t^a\in(0,1]$ such that
$\abs{Q_{t}^{*}(s,a)-Q_{t}^{*}(s,a')}
\le
L_t^a\, d_{\A}(a,a')^{\beta_t^a}$
for every $s\in \Sspace$ and all $a,a'\in\A$.
\end{assumption}

\begin{claim}[One-step sampled-action loss]
\label{clm:sampled-action-operator-loss}
Fix $\varepsilon_{t}\in(0,\bar\varepsilon_{t}]$ and a node index $i\in 1{:}C_t$, and suppose that the candidate action set $A_t^i=\set{a_t^{i,1{:}K_t}}$ is sampled conditionally i.i.d.\ from $\GSSAction{q_t^a}(\cdot\mid s_t^i)$.
Under Assumption~\ref{ass:small-ball} and Assumption~\ref{ass:holder-action}, with probability at least $1-\parens{1-m_t(\varepsilon_t)}^{K_t}$, the one-step loss from maximizing only over the sampled actions satisfies
\begin{equation}
\label{eq:sampled-action-one-step-loss}
0
\le
V_t^*(s_t^i)-\max_{a\in A_t^i}Q_t^*(s_t^i,a)
\le
L_t^a\varepsilon_{t}^{\beta_t^a}
\end{equation}
\end{claim}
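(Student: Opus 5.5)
We argue conditionally on the realized node $s_t^i$, treating it as fixed. By hypothesis, the candidate actions $a_t^{i,1{:}K_t}$ are then i.i.d.\ from $\GSSAction{q_t^a}(\cdot\mid s_t^i)$. The lower bound in \eqref{eq:sampled-action-one-step-loss} holds deterministically. Since $V_t^*(s)=\sup_{a\in\A}Q_t^*(s,a)$ and $A_t^i\subseteq\A$, we have $\max_{a\in A_t^i}Q_t^*(s_t^i,a)\le V_t^*(s_t^i)$. Here we use the Bellman identity $V_t^*(s)=\sup_a Q_t^*(s,a)$ for the finite-horizon optimal values as defined in the Background section.

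The upper bound comes from a covering event. Apply Assumption~\ref{ass:small-ball} with $\varepsilon=\varepsilon_t\le\bar\varepsilon_t$ and $s=s_t^i$. This gives an optimal action $a^\star\bydef a_{t,\varepsilon_t}^{\star}(s_t^i)\in\argmaxset_t(s_t^i)$ whose open ball $B\bydef\{a:d_{\A}(a,a^\star)<\varepsilon_t\}$ has $\GSSAction{q_t^a}(B\mid s_t^i)\ge m_t(\varepsilon_t)$. Let $E$ be the event that at least one sampled action lies in $B$. By conditional independence,
\begin{equation*}
\Prob(E^c\mid s_t^i)=\bigl(1-\GSSAction{q_t^a}(B\mid s_t^i)\bigr)^{K_t}\le\bigl(1-m_t(\varepsilon_t)\bigr)^{K_t}.
\end{equation*}
Integrating over the law of $s_t^i$ preserves this bound, since it holds uniformly in $s_t^i$.

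On $E$, pick $a_t^{i,k}\in B$. Because $a^\star\in\argmaxset_t(s_t^i)$, we have $V_t^*(s_t^i)=Q_t^*(s_t^i,a^\star)$. Assumption~\ref{ass:holder-action} then gives
\begin{equation*}
V_t^*(s_t^i)-\max_{a\in A_t^i}Q_t^*(s_t^i,a)\le Q_t^*(s_t^i,a^\star)-Q_t^*(s_t^i,a_t^{i,k})\le L_t^a\,d_{\A}(a^\star,a_t^{i,k})^{\beta_t^a}\le L_t^a\varepsilon_t^{\beta_t^a},
\end{equation*}
where the last step uses monotonicity of $x\mapsto x^{\beta_t^a}$ for $\beta_t^a>0$.

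The main subtlety is the identity $V_t^*(s)=\max_a Q_t^*(s,a)$, together with attainment of the maximum. Attainment is guaranteed implicitly, because Assumption~\ref{ass:small-ball} asserts that $\argmaxset_t(s)$ is nonempty. The identity itself is the standard dynamic-programming fact for the definitions given, and we take it as known.

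A second, minor point is measurability of the event $E$. This requires the conditional law $\GSSAction{q_t^a}(\cdot\mid s)$ to be a proper kernel. Since $B$ depends on $s_t^i$, we evaluate the probability conditionally given $s_t^i$ rather than unconditionally. Both points are routine, and the remainder is the one-line Hölder estimate above.
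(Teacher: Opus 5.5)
Your proof is correct and follows the paper's argument step for step: condition on $s_t^i$, take the small-ball witness $a_{t,\varepsilon_t}^{\star}(s_t^i)$, bound by $(1-m_t(\varepsilon_t))^{K_t}$ the probability that all $K_t$ conditionally i.i.d.\ candidates miss its $\varepsilon_t$-ball, and on the complement apply the action-side H\"older bound to a candidate inside the ball. You also justify the lower bound explicitly via $V_t^*(s)=\sup_{a}Q_t^*(s,a)$ and $A_t^i\subseteq\A$, which the paper leaves implicit even though it relies on the same identity when it writes $Q_t^*(s_t^i,a_{t,\varepsilon_t}^{\star}(s_t^i))=V_t^*(s_t^i)$.
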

The proof is in Appendix~\ref{app:proof-sampled-action-operator-loss}.
Under Claim~\ref{clm:sampled-action-operator-loss}, the sampled action sets are max-close in the sense of Definition~\ref{def:max-close-action-sets}, with $\Gamma_t=L_t^a\varepsilon_t^{\beta_t^a}$ and $\delta_t^a=\parens{1-m_t(\varepsilon_t)}^{K_t}$.

For the following \GSSScore{graph backup} assumptions, we assume that they hold true for all sampled-graph index pairs $i\in 1{:}C_t$ and $k\in 1{:}K_t$.
We write $V_{t+1}^{*,1{:}C_{t+1}}
\bydef \parens{V_{t+1}^{*}(s_{t+1}^{j})}_{j\in 1{:}C_{t+1}}$ for the optimal-value vector at the next layer, and $\hat V_{t+1}^{1{:}C_{t+1}}$ for the realized next-layer value vector.
For the \GSSScore{graph backup} of an arbitrary vector $\mathbf{v}\in\mathbb{R}^{C_{t+1}}$, we shorthand $\mathcal B_t^{i,k}[\mathbf{v}] \bydef \GSSScore{\mathsf{B}_{t}}\parens{s_{t}^{i},a_{t}^{i,k};S_{t+1},\mathbf{v},q_{t}^{s}}$.
For action-value estimates we denote
$Q_t^{*,i,k}\bydef Q_t^*(s_t^{i},a_t^{i,k})$,
$\hat Q_t^{i,k,\star}\bydef \mathcal B_t^{i,k}[V_{t+1}^{*,1{:}C_{t+1}}]$,
and
$\hat Q_t^{i,k}\bydef \mathcal B_t^{i,k}[\hat V_{t+1}^{1{:}C_{t+1}}]$ respectively for the exact optimal action-value at $(s_t^i,a_t^{i,k})$, the graph backup output under exact next-layer optimal values, and the realized recursive backup output.

\begin{assumption}[Graph backup bound]
\label{ass:graph-backup-error-bound} 
For every $i\in 1{:}C_t$, $k\in 1{:}K_t$, and $\lambda_t>0$, the function $\delta_{\loc,t}$ bounds the failure probability:
$
\Prob\!(
\abs{\hat Q_t^{i,k,\star}-Q_t^{*,i,k}}
\le
\lambda_t
)
\ge
1-\delta_{\loc,t}(\lambda_t,C_{t+1}).
$
\end{assumption}

\begin{assumption}[Graph backup stability]
\label{ass:graph-backup-stability}
For every $i\in 1{:}C_t$, $k\in 1{:}K_t$, and $\mathbf{v}\in\mathbb R^{C_{t+1}}$, if $\norm{\mathbf{v}-V_{t+1}^{*,1{:}C_{t+1}}}_\infty\le \alpha$, then
$\abs{\mathcal B_t^{i,k}[\mathbf{v}]-\hat Q_t^{i,k,\star}}
\le
\gamma\alpha .$
\end{assumption}
\begin{defn}[Controlled graph backup]
\label{def:controlled-graph-backup}
$\GSSScore{\mathsf{B}_{t}}$ is
\emph{controlled} if it satisfies Assumptions~\ref{ass:graph-backup-error-bound} and~\ref{ass:graph-backup-stability}.
\end{defn}
\noindent
For a controlled \GSSScore{graph backup}, the following bound holds:
\begin{claim}[Controlled graph backup bound]
\label{clm:recursive-graph-backup}
On the event
$\norm{\hat V_{t+1}^{1{:}C_{t+1}}-V_{t+1}^{*,1{:}C_{t+1}}}_\infty\le\alpha_{t+1}$,
under Assumption~\ref{ass:graph-backup-error-bound} and Assumption~\ref{ass:graph-backup-stability}, with probability at least $1-C_tK_t\,\delta_{\loc,t}(\lambda_t,C_{t+1})$, the recursive one-step graph backup error satisfies
\begin{equation}
\label{eq:recursive-graph-backup-error}
\abs{\hat Q_t^{i,k}-Q_t^{*,i,k}}
\le
\gamma\alpha_{t+1}
+\lambda_t
\qquad
\text{for every } i\in 1{:}C_t \text{ and } k\in 1{:}K_t.
\end{equation}
\end{claim}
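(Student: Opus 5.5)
The proof splits the error with the triangle inequality through the intermediate quantity $\hat Q_t^{i,k,\star}=\mathcal B_t^{i,k}[V_{t+1}^{*,1{:}C_{t+1}}]$:
\[
\abs{\hat Q_t^{i,k}-Q_t^{*,i,k}}
\le
\abs{\mathcal B_t^{i,k}[\hat V_{t+1}^{1{:}C_{t+1}}]-\hat Q_t^{i,k,\star}}
+\abs{\hat Q_t^{i,k,\star}-Q_t^{*,i,k}} .
\]
The first term is a deterministic \emph{propagation} term and the second a stochastic \emph{local} term. The two are handled by Assumption~\ref{ass:graph-backup-stability} and Assumption~\ref{ass:graph-backup-error-bound} respectively, and a union bound then combines them over all index pairs.

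\textbf{Propagation term.} Work on the event
$\norm{\hat V_{t+1}^{1{:}C_{t+1}}-V_{t+1}^{*,1{:}C_{t+1}}}_\infty\le\alpha_{t+1}$. Apply Assumption~\ref{ass:graph-backup-stability} with $\mathbf v=\hat V_{t+1}^{1{:}C_{t+1}}$ and $\alpha=\alpha_{t+1}$. This gives $\abs{\mathcal B_t^{i,k}[\hat V_{t+1}^{1{:}C_{t+1}}]-\hat Q_t^{i,k,\star}}\le\gamma\alpha_{t+1}$ for every $i,k$ simultaneously.

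No probability is spent on this step, because the stability assumption is stated for every vector $\mathbf v$ and every index pair, and so also covers the realized, data-dependent $\hat V_{t+1}$. This is why the recursive estimate being correlated with the layer $S_{t+1}$ causes no trouble: we never need independence between $\hat V_{t+1}$ and the backup weights.

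\textbf{Local term.} For each fixed pair $(i,k)$, Assumption~\ref{ass:graph-backup-error-bound} gives $\Prob(\abs{\hat Q_t^{i,k,\star}-Q_t^{*,i,k}}>\lambda_t)\le\delta_{\loc,t}(\lambda_t,C_{t+1})$. There are exactly $C_tK_t$ pairs. A union bound therefore shows that, with probability at least $1-C_tK_t\,\delta_{\loc,t}(\lambda_t,C_{t+1})$, all local errors are at most $\lambda_t$ at once.

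This event does not involve $\hat V_{t+1}$, so intersecting it with the conditioning event on $\alpha_{t+1}$ is legitimate. On the intersection, the triangle inequality gives \eqref{eq:recursive-graph-backup-error} for every $i\in 1{:}C_t$ and $k\in 1{:}K_t$.

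\textbf{Main obstacle.} The only delicate point is how to read ``on the event $\ldots$, with probability at least $\ldots$''. I will state it as follows: the event that the bound fails while the $\alpha_{t+1}$-event holds has probability at most $C_tK_t\,\delta_{\loc,t}$. This requires no conditioning argument.

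Under that reading, the failure probability comes entirely from the local event, which is controlled unconditionally by Assumption~\ref{ass:graph-backup-error-bound}. This makes the bound directly usable in a later induction over depths via a further union bound.
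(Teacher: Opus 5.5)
Your proposal is correct and follows the paper's proof essentially step for step. Both proofs go through the triangle inequality at $\hat Q_t^{i,k,\star}$, take the deterministic stability bound $\gamma\alpha_{t+1}$ on the $\alpha_{t+1}$-event from Assumption~\ref{ass:graph-backup-stability}, take the local bound $\lambda_t$ from Assumption~\ref{ass:graph-backup-error-bound}, and finish with a union bound over the $C_tK_t$ index pairs. You read the conclusion as saying that the failure event intersected with the $\alpha_{t+1}$-event has probability at most $C_tK_t\,\delta_{\loc,t}(\lambda_t,C_{t+1})$; this is exactly how the paper uses the claim later, as the bound on $\Prob(\mathcal Q_t^c\cap\mathcal V_{t+1})$ in the proof of Theorem~\ref{thm:master-graph-concentration}.
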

The proof is in Appendix~\ref{app:graph-backup-details}.
For densities $p$ and $q$, define the exponentiated R{\'e}nyi divergences~\citep{van2014renyi} $d_\infty(p\|q)=\operatorname{ess\,sup}_{x\sim q}\,p(x)/q(x)$ and $d_2(p\|q)\bydef\E_{x\sim q}[(p(x)/q(x))^{2}]$.
For example, under the following theorem's assumptions with a bounded $d_\infty$ divergence, the SNIS backup is a controlled graph backup in the sense of Definition~\ref{def:controlled-graph-backup}, with $\lambda_t=\lambda$ and $\delta_{\loc,t}=3\exp \left(-N\,t(\lambda,N)^2\right)$:
\begin{manualtheorem}{}[SNIS concentration, bounded $d_\infty$ {\citet{Lim23jair}}]
\label{thm:lim-dinf}
Let $d_\infty(p\|q)<\infty$.
Let $x_1,\dots,x_N\overset{i.i.d.}{\sim}q$ and let $f$ satisfy $\|f\|_\infty\le f_{\max}$.
For any $\lambda>0$ such that $\lambda > f_{\max}d_\infty(p\|q)/\sqrt{N}$, define
$t(\lambda,N)\bydef \frac{\lambda}{f_{\max}d_\infty(p\|q)}-\frac{1}{\sqrt{N}}.$
Then
\begin{equation}
\label{eq:lim-dinf-concentration}
\Prob\!\left(\abs{\hat{\mu}_{\mathrm{SNIS},N}-\E_{x\sim p}[f(x)]}\le \lambda\right)
\ge 1-3\exp\!\left(-N\,t(\lambda,N)^2\right).
\end{equation}
\end{manualtheorem}
Appendix~\ref{app:graph-backup-details} shows a controlled Mean-of-Medians SNIS \GSSScore{graph backup}, for a bounded $d_2$ assumption.

\subsection{Graph Concentration Inequalities and Implications}
\label{subsec:graph-concentration}

Our main graph concentration inequality result turns single-node bounds into graph guarantees.
\begin{thm}[Graph recursive concentration inequality]
\label{thm:master-graph-concentration}
Fix a planning depth $T$ and finite realized candidate actions $a_t^{i,1{:}K_t}$.
Let $\kappa_T^{\mathrm{tail}}\ge0$ and $\Delta_T^{\mathrm{tail}}\in[0,1]$.
Assume that $\abs{\hat V_T^i-V_T^{*,i}}\le\kappa_T^{\mathrm{tail}}$, where $V_T^{*,i}\bydef V_T^*(s_T^i)$, except with probability at most $\Delta_T^{\mathrm{tail}}$ for each terminal node $i\in 1{:}C_T$.
Let
$\kappa_t
\bydef
\lambda_t+\Gamma_t$ and
$\Delta_t
\bydef
C_tK_t\,\delta_{\loc,t}(\lambda_t,C_{t+1})
+
C_t\,\delta_t^a$,
and define $\alpha_T=\kappa_T^{\mathrm{tail}}$ and, for $t=T-1,\dots,0$, $\alpha_t=\gamma\alpha_{t+1}+\kappa_t$.
For each $t<T$, suppose that the action sets $A_t^i$ are max-close in the sense of Definition~\ref{def:max-close-action-sets} for each node $i\in 1{:}C_t$, with parameters $\Gamma_t$ and $\delta_t^a$, and that the selected graph backup is controlled in the sense of Definition~\ref{def:controlled-graph-backup}, so that on the event $\norm{\hat V_{t+1}^{1{:}C_{t+1}}-V_{t+1}^{*,1{:}C_{t+1}}}_\infty\le\alpha_{t+1}$ Claim~\ref{clm:recursive-graph-backup} holds with parameter $\lambda_t$.
Then, with probability at least $1-C_T\Delta_T^{\mathrm{tail}}-\sum_{t=0}^{T-1}\Delta_t$, for every $t<T$:
\begin{equation}
\label{eq:master-graph-concentration-bounds}
\norm{\hat Q_t^{1{:}C_t,1{:}K_t}-Q_t^{*,1{:}C_t,1{:}K_t}}_{\infty}
\le
\gamma\alpha_{t+1}+\lambda_t,
\qquad
\norm{\hat V_t^{1{:}C_t}-V_t^{*,1{:}C_t}}_{\infty}\le\alpha_t.
\end{equation}
Moreover, for every $a_0^*\in\argmaxset_0(s_0)$, it holds that
$Q_0^*(s_0,a_0^*)-Q_0^*(s_0,\hat a_0)
\le
2\alpha_0$.
\end{thm}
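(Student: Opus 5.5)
The argument is a backward induction on the depth $t=T,T-1,\dots,0$, intersected with a union bound over all failure events. Define the good events as follows.
\begin{itemize}
\item[(i)] $E_T^{\mathrm{tail}}$: the terminal bound $\abs{\hat V_T^i-V_T^{*,i}}\le\kappa_T^{\mathrm{tail}}$ holds for every $i\in 1{:}C_T$. Its failure probability is at most $C_T\Delta_T^{\mathrm{tail}}$ by a union bound over terminal nodes.
\item[(ii)] For each $t<T$, $E_t^{a}$: every action set $A_t^i$, $i\in 1{:}C_t$, is max-close with radius $\Gamma_t$. Its failure probability is at most $C_t\delta_t^a$.
\item[(iii)] For each $t<T$, $E_t^{\mathrm{loc}}$: $\abs{\hat Q_t^{i,k,\star}-Q_t^{*,i,k}}\le\lambda_t$ for all pairs $(i,k)$. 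Its failure probability is at most $C_tK_t\,\delta_{\loc,t}(\lambda_t,C_{t+1})$ by Assumption~\ref{ass:graph-backup-error-bound}.
\end{itemize}
A single union bound gives the stated probability $1-C_T\Delta_T^{\mathrm{tail}}-\sum_t\Delta_t$ for the intersection $E$ of all these events. Defining $E$ through the local events $E_t^{\mathrm{loc}}$, rather than through the conditional statement of Claim~\ref{clm:recursive-graph-backup}, avoids conditioning subtleties: the events $E_t^{\mathrm{loc}}$ do not refer to $\hat V_{t+1}$, so their probabilities simply add. The remainder of the proof is deterministic on $E$.

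On $E$, I prove by backward induction that $\norm{\hat V_t^{1{:}C_t}-V_t^{*,1{:}C_t}}_\infty\le\alpha_t$. The base case $t=T$ is exactly $E_T^{\mathrm{tail}}$ with $\alpha_T=\kappa_T^{\mathrm{tail}}$. For the inductive step, assume the bound at $t+1$. Assumption~\ref{ass:graph-backup-stability} with $\mathbf v=\hat V_{t+1}^{1{:}C_{t+1}}$ gives $\abs{\hat Q_t^{i,k}-\hat Q_t^{i,k,\star}}\le\gamma\alpha_{t+1}$. Combining this with $E_t^{\mathrm{loc}}$ through the triangle inequality yields the $Q$-bound $\gamma\alpha_{t+1}+\lambda_t$ for every $(i,k)$; this reproduces Claim~\ref{clm:recursive-graph-backup}.

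For the $V$-bound, write
\[
\hat V_t^i-V_t^*(s_t^i)=\Bigl(\max_k\hat Q_t^{i,k}-\max_k Q_t^{*,i,k}\Bigr)+\Bigl(\max_{a\in A_t^i}Q_t^*(s_t^i,a)-V_t^*(s_t^i)\Bigr).
\]
The first term is at most $\gamma\alpha_{t+1}+\lambda_t$ in absolute value, since the max is $1$-Lipschitz in sup norm. The second term lies in $[-\Gamma_t,0]$ on $E_t^a$. Together these give $\abs{\hat V_t^i-V_t^{*,i}}\le\gamma\alpha_{t+1}+\lambda_t+\Gamma_t=\alpha_t$, which closes the induction.

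For the root guarantee, let $\hat a_0=a_0^{1,\hat k_0}$ and let $\tilde a\in A_0^1$ attain $\max_{a\in A_0^1}Q_0^*(s_0,a)$; this maximum exists because the set is finite. Using $\hat Q_0^{1,\hat k_0}\ge\hat Q_0^{1,\tilde k}$, I get
\begin{align*}
Q_0^*(s_0,a_0^*)-Q_0^*(s_0,\hat a_0)&\le\Gamma_0+Q_0^*(s_0,\tilde a)-Q_0^*(s_0,\hat a_0)\\
&\le\Gamma_0+\bigl(\hat Q_0^{1,\tilde k}+\epsilon\bigr)-\bigl(\hat Q_0^{1,\hat k_0}-\epsilon\bigr)\le\Gamma_0+2\epsilon,
\end{align*}
where $\epsilon=\gamma\alpha_1+\lambda_0$ and the middle step applies the $Q$-bound at $t=0$ to both actions. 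Since $2\epsilon+\Gamma_0\le2(\gamma\alpha_1+\lambda_0+\Gamma_0)=2\alpha_0$, this gives the claimed $2\alpha_0$.

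The main point needing care is the union bound. Claim~\ref{clm:recursive-graph-backup} is stated conditionally on the event $\norm{\hat V_{t+1}^{1{:}C_{t+1}}-V_{t+1}^{*,1{:}C_{t+1}}}_\infty\le\alpha_{t+1}$, and its probability statement is taken given the graph. The local events $E_t^{\mathrm{loc}}$ concern $\hat Q_t^{i,k,\star}$, which depends only on $S_{t+1}$, the exact values $V_{t+1}^*$ and the already-sampled $(s_t^i,a_t^{i,k})$. So I read Assumption~\ref{ass:graph-backup-error-bound} as an unconditional (or conditional-then-averaged) probability bound and sum the failure probabilities directly. This is why the per-layer terms in $\Delta_t$ add without any conditioning correction.
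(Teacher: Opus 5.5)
Your proof is correct and follows essentially the same route as the paper: a union bound over the terminal, action-coverage, and local-backup failure events, followed by a deterministic backward induction. Your static events $E_t^{\mathrm{loc}}$ simply unfold Claim~\ref{clm:recursive-graph-backup}: the paper's bad event $\mathcal Q_t^c\cap\mathcal V_{t+1}$ is contained in $(E_t^{\mathrm{loc}})^c$, which is exactly how that claim is proved. Your root-gap argument through the max-close witness $\tilde a$ differs only cosmetically from the paper's decomposition $V_0^{*,1}-\hat V_0^1+\hat Q_0^{1,\hat k_0}-Q_0^{*,1,\hat k_0}$, and both reach the same intermediate bound $\Gamma_0+2(\gamma\alpha_1+\lambda_0)\le 2\alpha_0$.
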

The proof is given in Appendix~\ref{app:proof-master-graph-concentration}, and the proof strategy is similar to that of \citet{Lim23jair}.

We now show how Claims~\ref{clm:smoothing-bias} and~\ref{clm:sampled-action-operator-loss} apply to Theorem~\ref{thm:master-graph-concentration}.
\begin{crl}[Sampled-action graph budgets]
\label{crl:sampled-action-graph-concentration}
In Definition~\ref{def:max-close-action-sets}, the fully enumerated case $A_t^i=\A$ gives $\Gamma_t=0$ and $\delta_t^a=0$, thus $\kappa_t=\lambda_t$ and $\Delta_t=C_t\abs{\A}\,\delta_{\loc,t}(\lambda_t,C_{t+1})$. If instead $A_t^i=\set{a_t^{i,1{:}K_t}}$ is sampled as in Claim~\ref{clm:sampled-action-operator-loss}, then $\Gamma_t=L_t^a\varepsilon_t^{\beta_t^a}$ and $\delta_t^a=\parens{1-m_t(\varepsilon_t)}^{K_t}$, and thus $\kappa_t=\lambda_t+L_t^a\varepsilon_t^{\beta_t^a}$ and $\Delta_t=C_tK_t\,\delta_{\loc,t}(\lambda_t,C_{t+1})+C_t\parens{1-m_t(\varepsilon_t)}^{K_t}$.
\end{crl}

\begin{crl}[Smoothed-transition graph budgets]
\label{crl:smoothed-transition-graph-concentration}
Suppose the exact transition density is replaced by the smoothed density $p_{t,\tau_t}(\cdot\mid s,a)$ and $G_t^*(s,a,\cdot)$ satisfies the assumptions of Claim~\ref{clm:smoothing-bias}.
Then Claim~\ref{clm:recursive-graph-backup} holds for the smoothed SNIS backup with the same $\delta_{\loc,t}$ and with $\lambda_t$ replaced by $\lambda_t+L_t^s m_{\beta_t^s,t}(\tau_t)$.
\end{crl}

Finally, we show some of the implications that follow by assuming bounded $d_\infty(p_t(\cdot\mid s,a)\,\|\, q_{t}^{s})$ on GSS concentration inequalities with SNIS backups.
\begin{crl}[Finite-action, absolutely continuous transition sample complexity]
\label{crl:gss_sample_complexity}
Assume $\A$ is finite and fully enumerated, the terminal estimate is exact, and the exact full-rank SNIS backup is used.
Assume also that
$\sup_{t<T,\,s,a}
d_\infty(p_t(\cdot\mid s,a)\,\|\, q_{t}^{s})
\le d_\infty^{\max}$.
Then, for every fixed $\epsilon>0$ and root failure probability $\delta\in(0,1)$, choosing a common layer width
$C
=\widetilde{O}(
R_{\max}^{2}(d_\infty^{\max})^{2}T^{4} \slash \epsilon^{2}
)$
suffices to make the returned root action $\hat a_0$ satisfy
\begin{equation}
\label{eq:gss-root-action-guarantee}
\Pr\!\left(
Q_0^*(s_0,a_0^*)-Q_0^*(s_0,\hat a_0)\le\epsilon
\right)
\ge
1-\delta
\qquad
\text{for every } a_0^*\in\argmaxset_0(s_0).
\end{equation}
Thus, the number of simulator queries is
$N_{\mathrm{GSS}}=TC=\widetilde{O}\!\left(R_{\max}^{2}(d_\infty^{\max})^{2}T^{5}/\epsilon^{2}\right)$, and the number of backup operations is $O(\abs{\A} T C^2)=\widetilde{O}\!\left(\abs{\A} R_{\max}^{4}(d_\infty^{\max})^{4}T^{9}/\epsilon^{4}\right)$.
\end{crl}

\begin{rem}[Comparison with Sparse Sampling]
Corollary~\ref{crl:gss_sample_complexity} gives
$N_{\mathrm{GSS}}=\widetilde{O}(R_{\max}^{2}(d_\infty^{\max})^{2}T^{5}/\epsilon^{2})$.
If $d_\infty^{\max}=O(T^n)$ for $n{\in}\N$, then $N_{\mathrm{GSS}}$ remains polynomial in $T$, whereas the Sparse Sampling budget $N_{\mathrm{SS}}$ required for target accuracy $\epsilon$ is exponential in $T$~\citep{Kearns02jml}.
See Appendix~\ref{app:proof-gss-sample-complexity}.
\end{rem}

\begin{crl}[Joint schedule for low-rank sampled-action GSS]
\label{crl:joint-schedule}
Assume the hypotheses of Corollaries~\ref{crl:sampled-action-graph-concentration} and~\ref{crl:smoothed-transition-graph-concentration}.
Suppose, in addition, that the terminal estimate is exact, or that its tail accuracy and failure budgets can be made small enough after the final width $C_T$ is fixed.
Assume also that $L_t^s m_{\beta_t^s,t}(\tau_t)\to0$ as $\tau_t\downarrow0$, and that the smoothed backup can be controlled, uniformly over realized node-action pairs, to any prescribed local accuracy $\lambda_t>0$ and local failure level $\rho_t\in(0,1)$ by increasing the state-side width $C_{t+1}$.
Then for every $\varepsilon_{\mathrm{root}}>0$ and $\delta\in(0,1)$, one can choose bandwidths $\tau_t$, action tolerances $\varepsilon_t$, action budgets $K_t$, and state-side widths $C_{t+1}$ so that $2\alpha_0\le\varepsilon_{\mathrm{root}}$ and $C_T\Delta_T^{\mathrm{tail}}+\sum_{t=0}^{T-1}\Delta_t\le\delta$.
Consequently,
$\Pr\left(
Q_0^*(s_0,a_0^*)-Q_0^*(s_0,\hat a_0)
\le
\varepsilon_{\mathrm{root}}
\right)
\ge
1-\delta$ for every $a_0^*\in\argmaxset_0(s_0)$.
\end{crl}

\section{Experiments and Results}
\label{sec:experiments}

We evaluate GSS in three continuous-control settings chosen to stress different parts of the method: controlled rollout mismatch, singular non-linear dynamics, and smoothed backups in high-dimensional state spaces.
All GSS runs use a JAX~\citep{Bradbury18github} implementation with fixed-shape batched graph construction, with domain-dependent \GSSAction{action proposal} and \GSSState{state proposal}.
\GSSTail{Tail values} use a single-sample return from $\pi_{\mathrm{rollout}}$.
\GSSScore{Graph backup} was either SNIS in the Double-Integrator domain, or smoothed SNIS in Lunar Lander and Reacher.
In some domains, we compared GSS to tree-based baselines DPW~\citep{Couetoux11iclio} and VPW~\citep{Lim21cdc} using the same rollout policy.
DPW and VPW implementations were based on POMDPs.jl~\citep{egorov2017pomdps} in Julia, to ensure efficient execution.
Full configurations, including action mixtures, state proposals, smoothing bandwidths, and DPW/VPW parameters, are in Appendix~\ref{app:experiments}.

\begin{figure*}[t]
    \centering
    \begin{subfigure}[t]{0.31\linewidth}
        \centering
        \includegraphics[width=\linewidth]{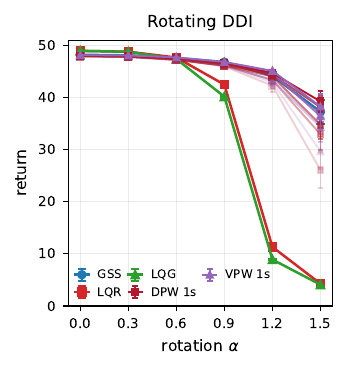}
        \caption{Rotating DDI}
        \label{fig:ddi_results}
    \end{subfigure}
    \hfill
    \begin{subfigure}[t]{0.31\linewidth}
        \centering
        \includegraphics[width=\linewidth]{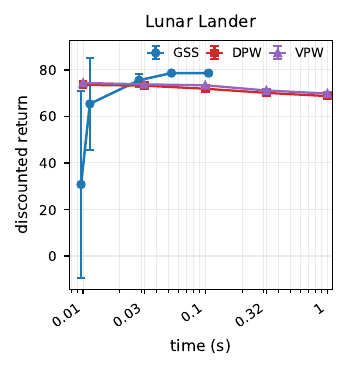}
        \caption{Lunar Lander}
        \label{fig:lunar_results}
    \end{subfigure}
    \hfill
    \begin{subfigure}[t]{0.31\linewidth}
        \centering
        \includegraphics[width=\linewidth]{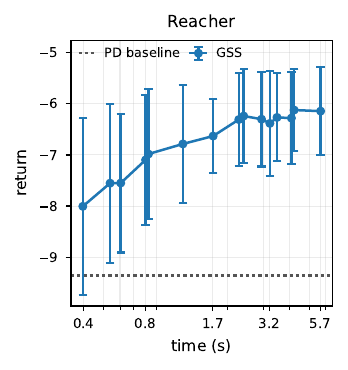}
        \caption{Reacher}
        \label{fig:reacher_results}
    \end{subfigure}
    \caption{
        Error bars show $\pm 2$ standard errors.
        We report mean planner performance for displayed domains, over discounted returns for Lunar Lander, or over undiscounted returns for the other domains.
        In Rotating DDI, the x-axis is the rotation parameter $\alpha$, which controls the mismatch of the rollout guide and the difficulty of the problem.
        DPW/VPW curves span wall-clock budgets $0.01$, $0.0316$, $0.1$, $0.316$, and $1.0$ seconds, with lower-budget curves faded and the $1.0$ second curves emphasized; GSS mean runtimes are at most $0.7$.
        In Lunar Lander and Reacher, the x-axis is mean planner runtime per planning session, averaged over the closed-loop execution in the environment. 
    }
    \label{fig:experiment_results}
\end{figure*}

\textbf{Rotating 4D-Double-Integrator (Rotating DDI).} 
Extending the benchmark of \citet{Mansley11icaps}, the state is $s=(x,v)\in\mathbb{R}^{8}$ and the acceleration action is $a\in[-2,2]^4$. However, an externally rotating frame introduces
$a_{\alpha}=a-2\alpha Jv+\alpha^2 x$, where $J$ rotates coordinate pairs $(1,2)$ and $(3,4)$ by $90^\circ$.
The transition density is the full-rank Gaussian $p_t(s'\mid s,a)=\mathcal{N}(s';\mu_{\alpha}(s,a),\Sigma)$, where $\mu_{\alpha}(s,a)$ is given by Euler numerical integration over the decision interval $\Delta t=0.2$, with diagonal isotropic covariance $\Sigma=0.02^2 I_8$.
We compare against the clipped LQR feedback baseline
$a_{\mathrm{LQR}}(x,v)=-(x-x_{\mathrm{goal}})-\sqrt{2}\,v$ projected to $[-2,2]^4$, and also report a full-state LQG/Riccati reference that is analytically optimal when $\alpha=0$ and actions are unbounded.
Thus $\alpha=0$ is the matched-guide setting: the LQR rollout/proposal follows the nominal double-integrator controller, and the LQG/Riccati reference is analytically optimal.
As the external rotation $\alpha$ increases, the same rollout/proposal becomes increasingly mismatched, so the sweep tests GSS under degrading rollout and proposal quality.

GSS uses an LQR-guided \GSSAction{action proposal} and the LQR rollout distribution for \GSSTail{tail values}, with action budget $64$ and state width $512$.
The \GSSScore{graph backup} is exact SNIS.
The \GSSState{state proposal} is a top-$0.95$ transition-moment mixture: at each layer, state-action components are scored by a sample plus LQR rollout tail, the top fraction of those are retained, and successor states are proposed from the equally weighted mixture over retained transition components.
GSS mean planning time is about $0.2$ seconds per decision at $\alpha=0$ and between $0.6$-$0.7$ seconds for the nonzero-coupling rotating-frame settings.

The results are shown in Figure~\ref{fig:ddi_results}, including DPW/VPW with time budgets ranging from $0.01$ seconds to $1.0$ second.
As $\alpha$ increases, LQR and LQG degrade sharply from $\alpha=0.9$ onward, while GSS maintains relatively consistent performance despite having a degraded rollout guide.
For all tested time budgets, GSS achieves better or comparable performance to DPW/VPW. The tree-based planners show a clear degradation pattern as $\alpha$ increases, achieving competitive performance with GSS in high-rotation regimes only for the highest time budgets.
The full results table is in Appendix~\ref{tab:ddi-budget-sweep}.

\textbf{Lunar Lander.}
The continuous Lunar Lander benchmark of \citet{Mern21aaai} features a 6D spacecraft attempting to land safely from an initially random position.
The state is $(x,z,\theta,v_x,v_z,\omega)\in\mathbb{R}^6$, containing horizontal position, altitude, attitude, and their respective velocities.
The transition model is a deterministic transition function perturbed by 3D additive Gaussian noise, thus it is singular in $\mathbb{R}^6$.
The action space $\A\subseteq\mathbb{R}^3$ consists of lateral thrust, main thrust, and thruster offset.

We compare GSS, DPW, and VPW under the same brake-near-ground guide and rollout policy.
In GSS, for \GSSAction{action proposal}, we use the parallelism of GSS to sample actions from multiple sources: brake-near-ground policy's deterministic action, perturbed rollout actions, randomly sampled actions, and a few expert-chosen actions. We use rollouts for \GSSTail{tail values}, and use them in the \GSSState{state proposal}: we keep the top-$0.8$ sampled-transition state-action components, and fit a diagonal Gaussian to those as the next-state proposal. For the \GSSScore{graph backup}, we use the SNIS formula with the single-point smoothed density~\eqref{eq:gss-linearized-smoothed-density}.

The results for Lunar Lander are presented in Figure~\ref{fig:lunar_results}.
GSS reaches the high-discounted-return regime by the medium budget and remains stable as the action and state budgets grow.
The tree planners are strongest at short plotted budgets, but their plotted mean performances decline as time budgets increase.

\textbf{Reacher.}
We implement Reacher-v5~\citep{Towers24arxiv} in JAX~\citep{Bradbury18github} using the MJX interface to MuJoCo~\citep{Todorov12iros}.
The 10D state contains joint-angle cosines and sines, target coordinates, arm angular velocities, and fingertip-target displacement; the actions are two clipped actuator torques.
The transition dynamics are deterministic, and \GSSScore{graph backups} are smoothed single-point linearized SNIS~\eqref{eq:gss-linearized-smoothed-density}.

In Reacher, we compare GSS against a deterministic proportional-derivative (PD) controller. GSS uses this controller in the \GSSAction{action proposal}, utilizing one controller action and the remaining action budget split between noisy-controller and uniform samples, and also uses it as the rollout policy for \GSSTail{tail values}.
The \GSSState{state proposal} is a transition-mixture, without any top-component pruning.
The budget frontier in Figure~\ref{fig:reacher_results} uses planner horizon $20$ and rollout cap $50$, spanning state widths of $256$-$2048$ and action budgets of $24$-$64$.
Every tested GSS budget improves the local controller, and increasing the state width and action budget generally improves performance with diminishing returns.
This shows that GSS can improve baseline policies and scale effectively in a high-dimensional, deterministic-transition domain.

\section{Conclusion}

We present Graph Sparse Sampling, an alternative to tree-based online planning, by sampling shared successor state layers, evaluating many candidate actions against them, and using generalized graph backups to propagate values through a fixed-shape planning graph.
GSS changes the bottleneck from branching a search tree, to designing effective state and action proposals to focus on relevant portions of the action and state spaces, yet with enough overlap to support reliable reuse.
Our analysis makes this tradeoff explicit, and we show explicit finite-sample performance guarantees in cases of density-ratio overlap, sampled action coverage, and bounded smoothing-bias.
We obtain performance guarantees with polynomial horizon dependence in regimes where the shared state layers remain representative, and we discuss how to jointly schedule the state and action budgets and smoothing bandwidths to achieve any desired root accuracy and failure probability in low-rank, sampled-action settings.
Empirically, the same design supports large GPU batches, and improves or matches MCTS-based baselines across continuous-control problems with stochastic and deterministic transition cases, challenging horizons, and varying domain dimensions.

GSS is proposal-limited, as poor action or state proposals can make the graph spend most of its computation in the wrong regions.
It is natural to extend GSS into multiple passes, where proposals are adapted based on earlier value estimates, or use learned approximations as part of GSS components.
In future work, we plan to explore these directions, along with tightening our performance guarantees, adapting to anytime settings, and analyzing our methods in partially observable domains.

\begin{ack}
This work was supported by the Israel Ministry of Innovation, Science and Technology.
\end{ack}

\bibliographystyle{plainnat}
\bibliography{../../../../references/refs}

\begin{thebibliography}{33}
\providecommand{\natexlab}[1]{#1}
\providecommand{\url}[1]{\texttt{#1}}
\expandafter\ifx\csname urlstyle\endcsname\relax
  \providecommand{\doi}[1]{doi: #1}\else
  \providecommand{\doi}{doi: \begingroup \urlstyle{rm}\Url}\fi

\bibitem[Auger et~al.(2013)Auger, Couetoux, and Teytaud]{Auger13Sp}
David Auger, Adrien Couetoux, and Olivier Teytaud.
\newblock Continuous upper confidence trees with polynomial exploration--consistency.
\newblock In \emph{Machine Learning and Knowledge Discovery in Databases: European Conference, ECML PKDD 2013, Prague, Czech Republic, September 23-27, 2013, Proceedings, Part I 13}, pages 194--209. Springer, 2013.

\bibitem[Barenboim and Indelman(2026)]{Barenboim26aij}
M.~Barenboim and V.~Indelman.
\newblock Online {POMDP} planning with anytime deterministic optimality guarantees.
\newblock \emph{Artificial Intelligence}, 350:\penalty0 104442, 2026.
\newblock \doi{https://doi.org/10.1016/j.artint.2025.104442}.

\bibitem[Belomestny et~al.(2020)Belomestny, Kaledin, and Schoenmakers]{Belomestny20mf}
Denis Belomestny, Maxim Kaledin, and John Schoenmakers.
\newblock Semitractability of optimal stopping problems via a weighted stochastic mesh algorithm.
\newblock \emph{Mathematical Finance}, 30\penalty0 (4):\penalty0 1591--1616, 2020.

\bibitem[Belomestny et~al.(2025)Belomestny, Schoenmakers, and Zorina]{Belomestny25joc}
Denis Belomestny, John Schoenmakers, and Veronika Zorina.
\newblock Weighted mesh algorithms for general markov decision processes: Convergence and tractability.
\newblock \emph{Journal of Complexity}, 88:\penalty0 101932, 2025.

\bibitem[Bradbury et~al.(2018)Bradbury, Frostig, Hawkins, Johnson, Katariya, Leary, Maclaurin, Necula, Paszke, Vander{P}las, Wanderman-{M}ilne, and Zhang]{Bradbury18github}
James Bradbury, Roy Frostig, Peter Hawkins, Matthew~James Johnson, Yash Katariya, Chris Leary, Dougal Maclaurin, George Necula, Adam Paszke, Jake Vander{P}las, Skye Wanderman-{M}ilne, and Qiao Zhang.
\newblock {JAX}: composable transformations of {P}ython+{N}um{P}y programs, 2018.
\newblock URL \url{http://github.com/jax-ml/jax}.

\bibitem[Broadie et~al.(2004)Broadie, Glasserman, et~al.]{Broadie04jcf}
Mark Broadie, Paul Glasserman, et~al.
\newblock A stochastic mesh method for pricing high-dimensional american options.
\newblock \emph{Journal of Computational Finance}, 7:\penalty0 35--72, 2004.

\bibitem[Browne et~al.(2012)Browne, Powley, Whitehouse, Lucas, Cowling, Rohlfshagen, Tavener, Perez, Samothrakis, and Colton]{Browne12ieee}
Cameron~B Browne, Edward Powley, Daniel Whitehouse, Simon~M Lucas, Peter~I Cowling, Philipp Rohlfshagen, Stephen Tavener, Diego Perez, Spyridon Samothrakis, and Simon Colton.
\newblock A survey of {Monte Carlo} tree search methods.
\newblock \emph{IEEE Transactions on Computational Intelligence and AI in games}, 4\penalty0 (1):\penalty0 1--43, 2012.

\bibitem[Cou{\"e}toux et~al.(2011)Cou{\"e}toux, Hoock, Sokolovska, Teytaud, and Bonnard]{Couetoux11iclio}
Adrien Cou{\"e}toux, Jean-Baptiste Hoock, Nataliya Sokolovska, Olivier Teytaud, and Nicolas Bonnard.
\newblock Continuous upper confidence trees.
\newblock In \emph{International conference on learning and intelligent optimization}, pages 433--445. Springer, 2011.

\bibitem[Dau(2022)]{Dau22thesis}
Hai~Dang Dau.
\newblock \emph{{Sequential Bayesian Computation}}.
\newblock PhD thesis, {Institut Polytechnique de Paris}, September 2022.
\newblock URL \url{https://theses.hal.science/tel-03848268}.

\bibitem[Egorov et~al.(2017)Egorov, Sunberg, Balaban, Wheeler, Gupta, and Kochenderfer]{egorov2017pomdps}
Maxim Egorov, Zachary~N Sunberg, Edward Balaban, Tim~A Wheeler, Jayesh~K Gupta, and Mykel~J Kochenderfer.
\newblock {POMDPs.jl}: A framework for sequential decision making under uncertainty.
\newblock \emph{The Journal of Machine Learning Research}, 18\penalty0 (1):\penalty0 831--835, 2017.

\bibitem[Julier et~al.(2004)Julier, Jeffrey, and Uhlmann]{Julier04ieee}
Simon~J. Julier, Jeffrey, and K.~Uhlmann.
\newblock Unscented filtering and nonlinear estimation.
\newblock In \emph{Proceedings of the IEEE}, pages 401--422, 2004.

\bibitem[Kearns et~al.(2002)Kearns, Mansour, and Ng]{Kearns02jml}
Michael Kearns, Yishay Mansour, and Andrew~Y Ng.
\newblock A sparse sampling algorithm for near-optimal planning in large {M}arkov decision processes.
\newblock \emph{Machine learning}, 49\penalty0 (2):\penalty0 193--208, 2002.

\bibitem[Kim et~al.(2020)Kim, Lee, Lim, Kaelbling, and Lozano-P{\'e}rez]{Kim20aaai}
Beomjoon Kim, Kyungjae Lee, Sungbin Lim, Leslie Kaelbling, and Tom{\'a}s Lozano-P{\'e}rez.
\newblock {Monte Carlo} tree search in continuous spaces using {Voronoi} optimistic optimization with regret bounds.
\newblock In \emph{AAAI Conf. on Artificial Intelligence}, volume~34, pages 9916--9924, 2020.

\bibitem[Kloek and Van~Dijk(1978)]{Kloek78econometrica}
Teun Kloek and Herman~K Van~Dijk.
\newblock {Bayesian} estimates of equation system parameters: an application of integration by {Monte Carlo}.
\newblock \emph{Econometrica: Journal of the Econometric Society}, pages 1--19, 1978.

\bibitem[Kocsis and Szepesv{\'a}ri(2006)]{Kocsis06ecml}
Levente Kocsis and Csaba Szepesv{\'a}ri.
\newblock Bandit based {M}onte-{C}arlo planning.
\newblock In \emph{European conference on machine learning}, pages 282--293. Springer, 2006.

\bibitem[Kujanp{\"a}{\"a} et~al.(2024)Kujanp{\"a}{\"a}, Babadi, Zhao, Kannala, Ilin, and Pajarinen]{Kujanpaa24aamas}
Kalle Kujanp{\"a}{\"a}, Amin Babadi, Yi~Zhao, Juho Kannala, Alexander Ilin, and Joni Pajarinen.
\newblock Continuous {Monte Carlo} graph search.
\newblock In \emph{Intl. Conf. on Autonomous Agents and Multiagent Systems (AAMAS)}, pages 1047--1056, 2024.

\bibitem[Lee et~al.(2020)Lee, Jeon, Kim, and Kim]{Lee20aaai}
Jongmin Lee, Wonseok Jeon, Geon-Hyeong Kim, and Kee-Eung Kim.
\newblock {Monte-Carlo} tree search in continuous action spaces with value gradients.
\newblock In \emph{Proceedings of the AAAI conference on artificial intelligence}, volume~34, pages 4561--4568, 2020.

\bibitem[Leurent and Maillard(2020)]{Leurent20acml}
Edouard Leurent and Odalric-Ambrym Maillard.
\newblock {Monte-Carlo} graph search: the value of merging similar states.
\newblock In Sinno~Jialin Pan and Masashi Sugiyama, editors, \emph{Asian Conference on Machine Learning (ACML 2020)}, pages 577 -- 592, Bangkok, Thailand, November 18-20 2020.

\bibitem[Lev-Yehudi et~al.(2025)Lev-Yehudi, Novitsky, Barenboim, Benchetrit, and Indelman]{LevYehudi25arxiv}
Idan Lev-Yehudi, Michael Novitsky, Moran Barenboim, Ron Benchetrit, and Vadim Indelman.
\newblock Value gradients with action adaptive search trees in continuous (po)mdps.
\newblock 2025.

\bibitem[Lim et~al.(2021)Lim, Tomlin, and Sunberg]{Lim21cdc}
Michael~H Lim, Claire~J Tomlin, and Zachary~N Sunberg.
\newblock {Voronoi} progressive widening: efficient online solvers for continuous state, action, and observation {POMDPs}.
\newblock In \emph{2021 60th IEEE conference on decision and control (CDC)}, pages 4493--4500. IEEE, 2021.

\bibitem[Lim et~al.(2023)Lim, Becker, Kochenderfer, Tomlin, and Sunberg]{Lim23jair}
Michael~H Lim, Tyler~J Becker, Mykel~J Kochenderfer, Claire~J Tomlin, and Zachary~N Sunberg.
\newblock Optimality guarantees for particle belief approximation of {POMDPs}.
\newblock \emph{Journal of Artificial Intelligence Research}, 77:\penalty0 1591--1636, 2023.

\bibitem[Mansley et~al.(2011)Mansley, Weinstein, and Littman]{Mansley11icaps}
Chris Mansley, Ari Weinstein, and Michael Littman.
\newblock Sample-based planning for continuous action markov decision processes.
\newblock In \emph{Proceedings of the International Conference on Automated Planning and Scheduling}, volume~21, pages 335--338, 2011.

\bibitem[Mao et~al.(2020)Mao, Zhang, Xie, and Basar]{Mao2020neurips}
Weichao Mao, Kaiqing Zhang, Qiaomin Xie, and Tamer Basar.
\newblock Poly-hoot: Monte-carlo planning in continuous space mdps with non-asymptotic analysis.
\newblock volume~33, pages 4549--4559, 2020.

\bibitem[Mern et~al.(2021)Mern, Yildiz, Sunberg, Mukerji, and Kochenderfer]{Mern21aaai}
John Mern, Anil Yildiz, Zachary Sunberg, Tapan Mukerji, and Mykel~J Kochenderfer.
\newblock {Bayesian} optimized {Monte Carlo} planning.
\newblock In \emph{Proceedings of the AAAI Conference on Artificial Intelligence}, volume~35, pages 11880--11887, 2021.

\bibitem[Owen(2013)]{Owen13book}
Art~B. Owen.
\newblock \emph{Monte Carlo theory, methods and examples}.
\newblock \url{https://artowen.su.domains/mc/}, 2013.

\bibitem[Papadimitriou and Tsitsiklis(1987)]{Papadimitriou87math}
Christos~H. Papadimitriou and John~N. Tsitsiklis.
\newblock The complexity of {Markov} decision processes.
\newblock \emph{Mathematics of operations research}, 12\penalty0 (3):\penalty0 441--450, 1987.

\bibitem[Shah et~al.(2022)Shah, Xie, and Xu]{Shah22or}
Devavrat Shah, Qiaomin Xie, and Zhi Xu.
\newblock Nonasymptotic analysis of {Monte Carlo Tree Search}.
\newblock \emph{Operations Research}, 70\penalty0 (6):\penalty0 3234--3260, 2022.

\bibitem[Todorov et~al.(2012)Todorov, Erez, and Tassa]{Todorov12iros}
Emanuel Todorov, Tom Erez, and Yuval Tassa.
\newblock {MuJoCo}: A physics engine for model-based control.
\newblock In \emph{2012 IEEE/RSJ International Conference on Intelligent Robots and Systems}, pages 5026--5033. IEEE, 2012.
\newblock \doi{10.1109/IROS.2012.6386109}.

\bibitem[Towers et~al.(2024)Towers, Kwiatkowski, Terry, Balis, De~Cola, Deleu, Goul{\~a}o, Kallinteris, Krimmel, KG, Perez-Vicente, Pierr{\'e}, Schulhoff, Tai, Tan, and Younis]{Towers24arxiv}
Mark Towers, Ariel Kwiatkowski, Jordan Terry, John~U. Balis, Gianluca De~Cola, Tristan Deleu, Manuel Goul{\~a}o, Andreas Kallinteris, Markus Krimmel, Arjun KG, Rodrigo Perez-Vicente, Andrea Pierr{\'e}, Sander Schulhoff, Jun~Jet Tai, Hannah Tan, and Omar~G. Younis.
\newblock Gymnasium: A standard interface for reinforcement learning environments.
\newblock \emph{arXiv preprint arXiv:2407.17032}, 2024.

\bibitem[Van~Erven and Harremos(2014)]{van2014renyi}
Tim Van~Erven and Peter Harremos.
\newblock R{\'e}nyi divergence and kullback-leibler divergence.
\newblock \emph{IEEE Transactions on Information Theory}, 60\penalty0 (7):\penalty0 3797--3820, 2014.

\bibitem[Wand and Jones(1994)]{Wand94book}
Matt~P Wand and M~Chris Jones.
\newblock \emph{Kernel smoothing}.
\newblock CRC press, 1994.

\bibitem[Williams et~al.(2017)Williams, Aldrich, and Theodorou]{Williams17jgcd}
Grady Williams, Andrew Aldrich, and Evangelos~A. Theodorou.
\newblock Model predictive path integral control: From theory to parallel computation.
\newblock \emph{Journal of Guidance, Control, and Dynamics}, 40\penalty0 (2):\penalty0 344--357, 2017.
\newblock \doi{10.2514/1.G001921}.

\bibitem[Yee et~al.(2016)Yee, Lis{\'y}, and Bowling]{Yee16ijcai}
Timothy Yee, Viliam Lis{\'y}, and Michael Bowling.
\newblock {Monte Carlo Tree Search} in continuous action spaces with execution uncertainty.
\newblock In Subbarao Kambhampati, editor, \emph{IJCAI}, pages 690--697, 2016.

\end{thebibliography}

\clearpage
\appendix
\section{Proofs and Derivations}
\label{app:deferred-theory}

\subsection{Proof of Claim~\ref{clm:smoothing-bias}}
\label{app:proof-smoothing-bias}

\begin{proof}
Fix $(s,a)\in\Sspace\times\A$.
Let $Y$ have transition law $p_t(\cdot\mid s,a)$ and, conditionally on $Y=y$, let $X$ have density
$k_{t,\tau_t}(\cdot\mid y)$.
By the definition of the smoothed density in~\eqref{eq:gss-smoothed-density}, the marginal law of $X$ has density
$p_{t,\tau_t}(\cdot\mid s,a)$.
Therefore,
\[
\begin{aligned}
&\abs{
\E_{s'\sim p_t(\cdot\mid s,a)}
\bracks{G_t^*(s,a,s')}
-
\E_{x\sim p_{t,\tau_t}(\cdot\mid s,a)}
\bracks{G_t^*(s,a,x)}
}
\\
&\quad=
\abs{
\int_{\Sspace}
\int_{\Sspace}
\parens{G_t^*(s,a,y)-G_t^*(s,a,x)}
k_{t,\tau_t}(x\mid y)\,dx\,p_t(dy\mid s,a)
}
\\
&\quad\le
\int_{\Sspace}
\int_{\Sspace}
\abs{G_t^*(s,a,y)-G_t^*(s,a,x)}
k_{t,\tau_t}(x\mid y)\,dx\,p_t(dy\mid s,a)
\\
&\quad\le
L_t^s
\int_{\Sspace}
\int_{\Sspace}
\dist(x,y)^{\beta_t^s}
k_{t,\tau_t}(x\mid y)\,dx\,p_t(dy\mid s,a)
\\
&\quad\le
L_t^s
\sup_y
\int_{\Sspace}
\dist(x,y)^{\beta_t^s}
k_{t,\tau_t}(x\mid y)\,dx
=
L_t^s\,m_{\beta_t^s,t}(\tau_t).
\end{aligned}
\]
The first inequality is the triangle inequality, the second is the assumed successor-state H\"older regularity of
$G_t^*(s,a,\cdot)$, and the last display is exactly the definition of
$m_{\beta_t^s,t}(\tau_t)$.
\end{proof}

\subsection{Proof of Claim~\ref{clm:sampled-action-operator-loss}}
\label{app:proof-sampled-action-operator-loss}

\begin{proof}
Fix $t<T$, $\varepsilon_t\in(0,\bar\varepsilon_t]$, and a realized node $s_t^i$.
Let $a_{t,\varepsilon_t}^{\star}(s_t^i)\in\argmaxset_t(s_t^i)$ be the witness action supplied by Assumption~\ref{ass:small-ball}.
Conditionally on $s_t^i$, Assumption~\ref{ass:small-ball} implies that one candidate action lands in the
$\varepsilon_t$-neighborhood of $a_{t,\varepsilon_t}^{\star}(s_t^i)$ with probability at least $m_t(\varepsilon_t)$.
The $K_t$ candidates in $A_t^i=\set{a_t^{i,1{:}K_t}}$ are conditionally independent, hence all of them miss this neighborhood with probability at most
$\parens{1-m_t(\varepsilon_t)}^{K_t}$.

On the complementary event, choose $a_i\in A_t^i$ with
$d_{\A}(a_i,a_{t,\varepsilon_t}^{\star}(s_t^i))<\varepsilon_t$.
Since $Q_t^*(s_t^i,a_{t,\varepsilon_t}^{\star}(s_t^i))=V_t^*(s_t^i)$, Assumption~\ref{ass:holder-action} gives
\[
V_t^*(s_t^i)-Q_t^*(s_t^i,a_i)
=
Q_t^*(s_t^i,a_{t,\varepsilon_t}^{\star}(s_t^i))-Q_t^*(s_t^i,a_i)
\le
L_t^a\varepsilon_t^{\beta_t^a}.
\]
Maximizing over all actions in $A_t^i$ can only improve on $a_i$, so
\[
0
\le
V_t^*(s_t^i)-\max_{a\in A_t^i}Q_t^*(s_t^i,a)
\le
L_t^a\varepsilon_t^{\beta_t^a}.
\]
This is~\eqref{eq:sampled-action-one-step-loss}.
\end{proof}

\subsection{Additional Graph-Backup Details}
\label{app:graph-backup-details}

\begin{proof}[Proof of Claim~\ref{clm:recursive-graph-backup}]
Work on the event
\[
\norm{\hat V_{t+1}^{1{:}C_{t+1}}-V_{t+1}^{*,1{:}C_{t+1}}}_\infty
\le
\alpha_{t+1}.
\]
For a fixed pair $i\in 1{:}C_t$ and $k\in 1{:}K_t$, Assumption~\ref{ass:graph-backup-stability} gives
\[
\abs{\hat Q_t^{i,k}-\hat Q_t^{i,k,\star}}
=
\abs{\mathcal B_t^{i,k}[\hat V_{t+1}^{1{:}C_{t+1}}]
-
\mathcal B_t^{i,k}[V_{t+1}^{*,1{:}C_{t+1}}]}
\le
\gamma\alpha_{t+1}.
\]
Assumption~\ref{ass:graph-backup-error-bound} gives
\[
\abs{\hat Q_t^{i,k,\star}-Q_t^{*,i,k}}
\le
\lambda_t
\]
except with probability at most $\delta_{\loc,t}(\lambda_t,C_{t+1})$.
On the intersection of these two events, the triangle inequality yields
\[
\abs{\hat Q_t^{i,k}-Q_t^{*,i,k}}
\le
\gamma\alpha_{t+1}+\lambda_t.
\]
Union-bounding the local concentration failures over the $C_tK_t$ realized node-action pairs gives, with probability at least
$1-C_tK_t\,\delta_{\loc,t}(\lambda_t,C_{t+1})$, the pointwise bound
\[
\abs{\hat Q_t^{i,k}-Q_t^{*,i,k}}
\le
\gamma\alpha_{t+1}+\lambda_t
\qquad
\text{for every } i\in 1{:}C_t \text{ and } k\in 1{:}K_t.
\]
Equivalently,
\[
\norm{\hat Q_t^{1{:}C_t,1{:}K_t}-Q_t^{*,1{:}C_t,1{:}K_t}}_\infty
\le
\gamma\alpha_{t+1}+\lambda_t.
\]
\end{proof}

For probability laws $P\ll Q$, write $\omega(y)\bydef dP/dQ(y)$ and define
\[
d_\infty(P\|Q)\bydef \operatorname*{ess\,sup}_{Y\sim Q}\omega(Y),
\qquad
d_2(P\|Q)\bydef\E_{Y\sim Q}\bracks{\omega(Y)^2}.
\]

\begin{defn}[SNIS and MoM-SNIS]
\label{def:mom-snis}
Given samples $Y_{1:N}\sim Q$, weights $\omega_n=\omega(Y_n)$, and values $f(Y_n)$, SNIS returns
\[
\hat{\mu}_{\mathrm{SNIS},N}(f)
\bydef
\frac{\sum_{n=1}^N\omega_n f(Y_n)}{\sum_{n=1}^N\omega_n}.
\]
For MoM-SNIS, partition $\set{1,\dots,N}$ into nonempty blocks $B_1,\dots,B_B$, form
\[
Z_b(f)
\bydef
\frac{\sum_{n\in B_b}\omega_n f(Y_n)}{\sum_{n\in B_b}\omega_n},
\qquad b=1,\dots,B,
\]
and return
\[
\hat{\mu}_{\mathrm{MoM\text{-}SNIS},N}(f)
\bydef
\operatorname{median}\set{Z_1(f),\dots,Z_B(f)}.
\]
\end{defn}

Ordinary SNIS backups satisfy Assumption~\ref{ass:graph-backup-stability} because their normalized weights are nonnegative and sum to one, so changing the child-value vector by at most $\alpha$ changes the continuation part of the backup by at most $\gamma\alpha$.
The same stability argument holds for median-of-blocks normalized backups: each blockwise normalized average is $\gamma$-Lipschitz in child-value sup norm, and the median is nonexpansive in sup norm.

The following source theorem is a bounded-$d_2$ analogue for MoM-SNIS.
It supplies one possible choice of the local function $\delta_{\loc,t}$ in Assumption~\ref{ass:graph-backup-error-bound}.
Denote
\[
\sigma_{\mathrm{CS}}^2\bydef d_2(P\|Q)-1,
\qquad
\sigma_{\mathrm{IS}}^2
\bydef
\E_{x\sim Q}\!\left[
\omega(x)^2
\left(f(x)-\E_{z\sim P}[f(z)]\right)^2
\right].
\]

\begin{manualtheorem}{}[MoM-SNIS one-sided concentration, bounded $d_2$ {\citet{Dau22thesis}}]
\label{thm:dau-prop2}
Fix $\delta\in(0,1)$ and assume $\sigma_{\mathrm{IS}}^2<\infty$.
If
\[
N \ge 8\left(\max\{32\sigma_{\mathrm{CS}}^2,1\}\right)\log(1/\delta),
\]
then the MoM-SNIS estimator of Definition~\ref{def:mom-snis} satisfies
\[
\Prob\!\left(
\hat{\mu}_{\mathrm{MoM\text{-}SNIS},N}(f)-\E_{x\sim P}[f(x)]
\ge
16\,\sigma_{\mathrm{IS}}\sqrt{\frac{2\log(1/\delta)}{N}}
\right)
\le
\delta.
\]
\end{manualtheorem}

Applying Theorem~\ref{thm:dau-prop2} to $f$ and to $-f$ gives the two-sided statement: for a target local failure level $\rho\in(0,1)$, if
\[
N \ge 8\left(\max\{32\sigma_{\mathrm{CS}}^2,1\}\right)\log(2/\rho),
\]
then
\[
\Prob\!\left(
\abs{
\hat{\mu}_{\mathrm{MoM\text{-}SNIS},N}(f)-\E_{x\sim P}[f(x)]
}
\le
16\,\sigma_{\mathrm{IS}}\sqrt{\frac{2\log(2/\rho)}{N}}
\right)
\ge
1-\rho.
\]
If $\norm{f}_\infty\le f_{\max}$, then
$\sigma_{\mathrm{IS}}^2\le 4f_{\max}^2d_2(P\|Q)$.
Indeed, since
$\abs{f(Y)-\E_{P}f}\le 2f_{\max}$,
\[
\sigma_{\mathrm{IS}}^2
=
\E_{Y\sim Q}\!\left[
\omega(Y)^2\parens{f(Y)-\E_{P}f}^{2}
\right]
\le
4f_{\max}^{2}\E_{Y\sim Q}\bracks{\omega(Y)^2}
=
4f_{\max}^{2}d_2(P\|Q).
\]
We now translate this local estimator statement into the graph-backup notation of
Assumption~\ref{ass:graph-backup-error-bound}.
Fix a depth $t<T$ and a realized node-action pair $(s_t^i,a_t^{i,k})$.
Let
\[
P_t^{i,k}\bydef p_t(\cdot\mid s_t^i,a_t^{i,k}),
\qquad
Q_t\bydef q_t^s,
\qquad
N\bydef C_{t+1},
\]
and apply the MoM-SNIS backup to the exact Bellman target
\[
f_t^{i,k}(x)
\bydef
G_t^*(s_t^i,a_t^{i,k},x)
=
r_t(s_t^i,a_t^{i,k},x)+\gamma V_{t+1}^*(x).
\]
Assume that, uniformly over the realized node-action pairs at depth $t$,
\[
\norm{f_t^{i,k}}_\infty\le f_{\max,t},
\qquad
d_2(P_t^{i,k}\|Q_t)\le D_{2,t}<\infty.
\]
Then for every such pair,
\[
\sigma_{\mathrm{CS},t}^{2}
=d_2(P_t^{i,k}\|Q_t)-1
\le D_{2,t}-1,
\qquad
\sigma_{\mathrm{IS},t}^{2}
\le 4f_{\max,t}^{2}D_{2,t}.
\]
Consequently, for any local failure level $\rho_t\in(0,1)$ such that
\[
C_{t+1}
\ge
8\max\{32(D_{2,t}-1),1\}\log(2/\rho_t),
\]
the exact-value MoM-SNIS graph backup satisfies
\[
\Prob\!\left(
\abs{\hat Q_t^{i,k,\star}-Q_t^{*,i,k}}
\le
32 f_{\max,t}
\sqrt{\frac{2D_{2,t}\log(2/\rho_t)}{C_{t+1}}}
\right)
\ge
1-\rho_t.
\]
Thus Assumption~\ref{ass:graph-backup-error-bound} holds for the MoM-SNIS backup at depth $t$ with the explicit choice
\[
\lambda_t
=
32 f_{\max,t}
\sqrt{\frac{2D_{2,t}\log(2/\rho_t)}{C_{t+1}}},
\qquad
\delta_{\loc,t}(\lambda_t,C_{t+1})=\rho_t,
\]
provided the displayed lower bound on $C_{t+1}$ holds.
Equivalently, a uniform depthwise $d_2$ overlap bound supplies the static, exact-next-value concentration term in the same role that the bounded-$d_\infty$ theorem plays in the main text.
Together with the normalized-weight stability argument above, this makes the MoM-SNIS graph backup controlled in the sense of Definition~\ref{def:controlled-graph-backup}, with the propagated recursive bound then given by Claim~\ref{clm:recursive-graph-backup}.

\subsection{Proof of Theorem~\ref{thm:master-graph-concentration}}
\label{app:proof-master-graph-concentration}

\begin{proof}
Use the main-text shorthand
\[
V_t^{*,i}\bydef V_t^*(s_t^i),
\qquad
Q_t^{*,i,k}\bydef Q_t^*(s_t^i,a_t^{i,k}).
\]
For each depth $t$, define the value event
\[
\mathcal V_t
\bydef
\left\{
\abs{\hat V_t^i-V_t^{*,i}}\le\alpha_t
\quad\text{for every }i\in 1{:}C_t
\right\}.
\]
For $t<T$, define the simultaneous backup event
\[
\mathcal Q_t
\bydef
\left\{
\abs{\hat Q_t^{i,k}-Q_t^{*,i,k}}
\le
\gamma\alpha_{t+1}+\lambda_t
\quad\text{for every }i\in 1{:}C_t,\ k\in 1{:}K_t
\right\},
\]
and the simultaneous action-coverage event
\[
\mathcal A_t
\bydef
\left\{
0
\le
V_t^{*,i}-\max_{k\in 1{:}K_t}Q_t^{*,i,k}
\le
\Gamma_t
\quad\text{for every }i\in 1{:}C_t
\right\}.
\]
The theorem assumptions give
\[
\begin{aligned}
\Prob(\mathcal V_T^c)
&\le
\sum_{i\in 1{:}C_T}
\Prob\!\left(\abs{\hat V_T^i-V_T^{*,i}}>\kappa_T^{\mathrm{tail}}\right)
\le C_T\Delta_T^{\mathrm{tail}},
\\
\Prob(\mathcal Q_t^c\cap\mathcal V_{t+1})
&\le
C_tK_t\,\delta_{\loc,t}(\lambda_t,C_{t+1}),
\\
\Prob(\mathcal A_t^c)&\le C_t\,\delta_t^a.
\end{aligned}
\]
The second inequality is Claim~\ref{clm:recursive-graph-backup} applied on the event $\mathcal V_{t+1}$.
The third inequality is the union bound over the $C_t$ max-close action-set failures.

We next prove the deterministic induction step.
Assume $\mathcal V_{t+1}\cap\mathcal Q_t\cap\mathcal A_t$ holds.
Set
\[
R_t^Q\bydef\gamma\alpha_{t+1}+\lambda_t,
\qquad
R_t\bydef R_t^Q+\Gamma_t
=
\gamma\alpha_{t+1}+\kappa_t
=
\alpha_t.
\]
For every node index $i\in 1{:}C_t$,
\[
\hat V_t^i
=
\max_{k\in 1{:}K_t}\hat Q_t^{i,k}
\le
\max_{k\in 1{:}K_t}Q_t^{*,i,k}+R_t^Q
\le
V_t^{*,i}+R_t.
\]
For the lower deviation,
\[
\begin{aligned}
V_t^{*,i}-\hat V_t^i
&\le
V_t^{*,i}
-
\max_{k\in 1{:}K_t}Q_t^{*,i,k}
+
\max_{k\in 1{:}K_t}Q_t^{*,i,k}
-
\max_{k\in 1{:}K_t}\hat Q_t^{i,k}
\\
&\le
\Gamma_t
+
\max_{k\in 1{:}K_t}\abs{\hat Q_t^{i,k}-Q_t^{*,i,k}}
\le
\Gamma_t+R_t^Q
=
R_t.
\end{aligned}
\]
Thus $\mathcal V_t$ holds.
Equivalently,
\[
\norm{\hat V_t^{1{:}C_t}-V_t^{*,1{:}C_t}}_\infty\le\alpha_t.
\]
The event $\mathcal Q_t$ itself is exactly the stated pointwise candidate-backup bound at depth $t$.
Equivalently,
\[
\norm{\hat Q_t^{1{:}C_t,1{:}K_t}-Q_t^{*,1{:}C_t,1{:}K_t}}_\infty
\le
\gamma\alpha_{t+1}+\lambda_t.
\]

Now define the total bad event
\[
\mathcal F
\bydef
\mathcal V_T^c
\cup
\bigcup_{t=0}^{T-1}\parens{\mathcal Q_t^c\cap\mathcal V_{t+1}}
\cup
\bigcup_{t=0}^{T-1}\mathcal A_t^c .
\]
By the union bound and the definition of $\Delta_t$,
\[
\Prob(\mathcal F)
\le
C_T\Delta_T^{\mathrm{tail}}
+
\sum_{t=0}^{T-1}
\bracks{
C_tK_t\,\delta_{\loc,t}(\lambda_t,C_{t+1})
+C_t\,\delta_t^a
}
=
C_T\Delta_T^{\mathrm{tail}}
+\sum_{t=0}^{T-1}\Delta_t.
\]
On $\mathcal F^c$, the terminal event $\mathcal V_T$ holds.
If $\mathcal V_{t+1}$ holds, then $\mathcal Q_t$ must hold because
$\mathcal Q_t^c\cap\mathcal V_{t+1}$ has been excluded, and $\mathcal A_t$ also holds.
The deterministic induction step therefore gives $\mathcal V_t$.
Backward induction proves that every $\mathcal V_t$ and every $\mathcal Q_t$ hold simultaneously on $\mathcal F^c$.
Thus, for every $t<T$, the pointwise value and candidate-backup inequalities hold for every $i\in 1{:}C_t$ and $k\in 1{:}K_t$.
Equivalently, the two displayed max-norm inequalities in the theorem hold with probability at least
$1-C_T\Delta_T^{\mathrm{tail}}-\sum_{t=0}^{T-1}\Delta_t$.

It remains to prove the root-action consequence on the same event.
Let $\hat k_0\in\argmax_{k\in 1{:}K_0}\hat Q_0^{1,k}$, so that
$\hat a_0=a_0^{1,\hat k_0}$ and $\hat V_0^1=\hat Q_0^{1,\hat k_0}$.
For any $a_0^*\in\argmaxset_0(s_0)$,
\[
\begin{aligned}
Q_0^*(s_0,a_0^*)-Q_0^*(s_0,\hat a_0)
&=
V_0^{*,1}-\hat V_0^1
+
\hat Q_0^{1,\hat k_0}-Q_0^{*,1,\hat k_0}
\\
&\le
\alpha_0
+
\gamma\alpha_1+\lambda_0
\le
2\alpha_0,
\end{aligned}
\]
because $\gamma\alpha_1+\lambda_0\le\gamma\alpha_1+\lambda_0+\Gamma_0=\alpha_0$.
\end{proof}

\subsection{Proof of Corollary~\ref{crl:sampled-action-graph-concentration}}
\label{app:proof-sampled-action-graph-concentration}

\begin{proof}
If $\mathcal A$ is finite and $A_t^i=\mathcal A$ for every node, then
\[
\max_{a\in A_t^i}Q_t^*(s_t^i,a)
=
\max_{a\in\mathcal A}Q_t^*(s_t^i,a)
=
V_t^*(s_t^i)
\]
deterministically.
Thus Definition~\ref{def:max-close-action-sets} holds with
$\Gamma_t=0$ and $\delta_t^a=0$.
Substituting these parameters and $K_t=\abs{\mathcal A}$ into Theorem~\ref{thm:master-graph-concentration} gives
$\kappa_t=\lambda_t$ and
$\Delta_t=C_t\abs{\mathcal A}\,\delta_{\loc,t}(\lambda_t,C_{t+1})$.

For sampled action sets, Claim~\ref{clm:sampled-action-operator-loss} states that each nodewise action set is max-close with
\[
\Gamma_t=L_t^a\varepsilon_t^{\beta_t^a},
\qquad
\delta_t^a=\parens{1-m_t(\varepsilon_t)}^{K_t}.
\]
Substituting these parameters into Theorem~\ref{thm:master-graph-concentration} gives
\[
\kappa_t
=
\lambda_t+L_t^a\varepsilon_t^{\beta_t^a},
\qquad
\Delta_t
=
C_tK_t\,\delta_{\loc,t}(\lambda_t,C_{t+1})
+
C_t\parens{1-m_t(\varepsilon_t)}^{K_t}.
\]
\end{proof}

\subsection{Proof of Corollary~\ref{crl:smoothed-transition-graph-concentration}}
\label{app:proof-smoothed-transition-graph-concentration}

\begin{proof}
Fix a depth $t<T$ and a realized node-action pair $(s_t^i,a_t^{i,k})$.
For this proof, the local concentration function $\delta_{\loc,t}$ is applied to the SNIS estimator whose target density is the smoothed density
$p_{t,\tau_t}(\cdot\mid s_t^i,a_t^{i,k})$ and whose proposal is
$q_{t}^{s}$.
Thus, with the exact next-layer optimal values inserted into the smoothed SNIS backup, the local concentration event is
\[
\abs{
\hat Q_{t,\tau_t}^{i,k,\star}
-
\E_{x\sim p_{t,\tau_t}(\cdot\mid s_t^i,a_t^{i,k})}
\bracks{G_t^*(s_t^i,a_t^{i,k},x)}
}
\le
\lambda_t,
\]
except with probability at most $\delta_{\loc,t}(\lambda_t,C_{t+1})$.
Claim~\ref{clm:smoothing-bias} gives the deterministic comparison
\[
\abs{
\E_{x\sim p_{t,\tau_t}(\cdot\mid s_t^i,a_t^{i,k})}
\bracks{G_t^*(s_t^i,a_t^{i,k},x)}
-
Q_t^*(s_t^i,a_t^{i,k})
}
\le
L_t^s m_{\beta_t^s,t}(\tau_t).
\]
Combining the two displays yields the static graph-backup error
\[
\abs{\hat Q_{t,\tau_t}^{i,k,\star}-Q_t^{*,i,k}}
\le
\lambda_t+L_t^s m_{\beta_t^s,t}(\tau_t)
\]
with the same local failure probability.
Assumption~\ref{ass:graph-backup-stability} then adds the usual propagated term
$\gamma\alpha_{t+1}$ when
$\norm{\hat V_{t+1}^{1{:}C_{t+1}}-V_{t+1}^{*,1{:}C_{t+1}}}_\infty\le\alpha_{t+1}$.
The proof of Claim~\ref{clm:recursive-graph-backup}, with
$\lambda_t$ replaced by
$\lambda_t+L_t^s m_{\beta_t^s,t}(\tau_t)$, gives the corollary.
\end{proof}

\subsection{Proof of Corollary~\ref{crl:gss_sample_complexity}}
\label{app:proof-gss-sample-complexity}

\begin{proof}
In the finite-action fully enumerated case, Corollary~\ref{crl:sampled-action-graph-concentration} gives
$\Gamma_t=0$, $\delta_t^a=0$, and $K_t=\abs{\A}$.
With an exact terminal estimate, $\kappa_T^{\mathrm{tail}}=\Delta_T^{\mathrm{tail}}=0$.
Set $C_0=1$ and choose a common successor-layer width $C_t=C$ for $1\le t\le T$ and a common local radius $\lambda_t=\lambda$.
The recursion in Theorem~\ref{thm:master-graph-concentration} gives
\[
\alpha_0
=
\sum_{t=0}^{T-1}\gamma^t\lambda
\le
T\lambda,
\]
because $\gamma\le1$ in the finite-horizon setting.
Thus $\lambda=\epsilon/(2T)$ ensures $2\alpha_0\le\epsilon$.

We next choose $C$ so that the graph failure probability is at most $\delta$.
The bounded reward assumption implies, for every $t<T$,
\[
\norm{V_{t+1}^*}_\infty
\le
(T-t-1)R_{\max}.
\]
Therefore the single-sample Bellman target satisfies
\[
\norm{G_t^*(s,a,\cdot)}_\infty
\le
R_{\max}+\gamma (T-t-1)R_{\max}
\le
TR_{\max},
\qquad
B_T\bydef TR_{\max}.
\]
By the uniform overlap assumption,
$d_\infty(p_t(\cdot\mid s,a)\,\|\,q_{t}^{s})\le d_\infty^{\max}$.
Applying Theorem~\ref{thm:lim-dinf} with
$f_{\max}=B_T$, $N=C$, and $d_\infty(p\|q)\le d_\infty^{\max}$ gives
\[
\delta_{\loc,t}(\lambda,C)
\le
3\exp\!\left(
-C
\left(
\frac{\lambda}{B_Td_\infty^{\max}}
-\frac{1}{\sqrt C}
\right)^2
\right),
\]
provided
$\lambda>B_Td_\infty^{\max}/\sqrt C$.
Define
\[
A_\lambda
\bydef
\left(
\frac{B_Td_\infty^{\max}}{\lambda}
\right)^2,
\qquad
\ell_\lambda
\bydef
\max\!\left\{
1,
\log\!\left(
\frac{12eT\abs{\A}A_\lambda}{\delta}
\right)
\right\}.
\]
Choose
\[
C
\ge
8A_\lambda\ell_\lambda .
\]
Then $C\ge4A_\lambda$, so the positivity condition in Theorem~\ref{thm:lim-dinf} holds and
\[
\frac{\lambda}{B_Td_\infty^{\max}}
-\frac{1}{\sqrt C}
=
\frac{1}{\sqrt{A_\lambda}}
-\frac{1}{\sqrt C}
\ge
\frac{1}{2\sqrt{A_\lambda}}.
\]
Consequently, uniformly over depths and realized node-action pairs,
\[
\delta_{\loc,t}(\lambda,C)
\le
3\exp\!\left(-\frac{C}{4A_\lambda}\right).
\]

It remains to check the graph-level union bound.
Since $C_0=1$ and $C_t=C$ for $t\ge1$,
$\sum_{t=0}^{T-1}C_t\le TC$ for $C\ge1$.
Theorem~\ref{thm:master-graph-concentration} and Corollary~\ref{crl:sampled-action-graph-concentration} therefore require
\[
\sum_{t=0}^{T-1}
C_t\abs{\A}\,\delta_{\loc,t}(\lambda,C)
\le
3TC\abs{\A}
\exp\!\left(-\frac{C}{4A_\lambda}\right).
\]
Let $x=C/(4A_\lambda)$ and
$M=12eT\abs{\A}A_\lambda/\delta$.
The choice of $C$ gives $x\ge2\max\{1,\log M\}$.
For such $x$, we have $x e^{-x}\le e/M$:
if $M\le e$, then $x\ge2$ gives $x e^{-x}\le2e^{-2}\le e/M$, while if $M>e$, then $x e^{-x}$ is decreasing for $x\ge1$ and
$2\log(M)/M^2\le e/M$.
Hence
\[
3TC\abs{\A}\exp(-C/(4A_\lambda))
=
12T\abs{\A}A_\lambda x e^{-x}
\le
\delta .
\]
Thus the total failure probability in Theorem~\ref{thm:master-graph-concentration} is at most $\delta$.

Substituting $\lambda=\epsilon/(2T)$ and $B_T=TR_{\max}$ gives
\[
A_\lambda
=
\frac{4R_{\max}^2(d_\infty^{\max})^2T^4}{\epsilon^2}.
\]
Hence the explicit width choice above is
\[
C
\ge
\frac{32R_{\max}^2(d_\infty^{\max})^2T^4}{\epsilon^2}
\max\!\left\{
1,
\log\!\left(
\frac{48eT\abs{\A}R_{\max}^2(d_\infty^{\max})^2T^4}{\delta\epsilon^2}
\right)
\right\}.
\]
Equivalently,
\[
C
=
\widetilde{O}\!\left(
\frac{R_{\max}^2(d_\infty^{\max})^2T^4}{\epsilon^2}
\right),
\]
where the logarithm is the one displayed above.
Theorem~\ref{thm:master-graph-concentration} then gives the stated root-action bound.
Since the graph uses $T$ shared successor layers of width $C$, the simulator-query count scales as $N_{\mathrm{GSS}}=TC$.
For the exact dense SNIS backup, count one backup edge operation for one parent-action-child contribution in the sum~\eqref{eq:gss-backup-snis}.
At depth $t$, there are $C_t\abs{\A}$ enumerated parent-action pairs, and each pair is evaluated against all $C_{t+1}$ child states.
Hence the total number of backup operations is
\[
\sum_{t=0}^{T-1} C_t\abs{\A}C_{t+1}
=
\abs{\A}\parens{C+(T-1)C^2}
\le
\abs{\A}TC^2,
\]
where $C_0=1$, $C_t=C$ for $1\le t\le T$, and $C\ge1$.
Substituting the width bound gives
\[
O(\abs{\A}TC^2)
=
\widetilde{O}\!\left(
\frac{\abs{\A}R_{\max}^{4}(d_\infty^{\max})^{4}T^9}{\epsilon^4}
\right).
\]
Thus, if $d_\infty^{\max}=O(T^n)$ for fixed $n\in\N$, $N_{\mathrm{GSS}}$ scales polynomially in $T$.
\end{proof}

\noindent
We use the following planner-to-policy conversion, adapted to our notation from \citet[Lemma 5]{Kearns02jml}.
\begin{manualtheorem}{A.1}[Planner-to-policy conversion]
\label{thm:planner-to-policy-conversion}
Let a planning subroutine be invoked at states encountered by the induced policy.
For the discounted infinite-horizon setting, suppose that for every state $s$, the returned action $\hat a(s)$ satisfies
\[
\Prob\!\left(
Q^*(s,a^*)-Q^*(s,\hat a(s))\le\beta
\right)
\ge
1-\rho,
\qquad
a^*\in\argmax_{a\in\mathcal A}Q^*(s,a).
\]
If $\norm{V^*}_\infty\le V_{\max}$, then the induced randomized policy $\pi$ satisfies
\[
V^*(s)-V^\pi(s)
\le
\frac{\beta+2\rho V_{\max}}{1-\gamma}
\qquad
\text{for every }s.
\]
For a finite horizon of length $T$, if the same condition holds at every state and time with $Q_t^*$ in place of $Q^*$ and $\norm{Q_t^*}_\infty\le TR_{\max}$, then the induced time-dependent policy satisfies
\[
V_0^*(s)-V_0^\pi(s)
\le
T\parens{\beta+2\rho T R_{\max}} .
\]
\end{manualtheorem}

\paragraph{Additional policy-value scalings.}
The main text states the root-action consequence because it follows directly from Theorem~\ref{thm:master-graph-concentration}.
For completeness, we record how the same local-width calculation changes when the planner is invoked repeatedly to induce a policy.
The finite-horizon part of Theorem~\ref{thm:planner-to-policy-conversion} applies once the planner has a per-call action-value gap bound $\beta$ and per-call failure probability $\rho$.
Let $\bar\beta\bydef\beta+2\rho T R_{\max}$.
The theorem gives the policy-value loss bound
\[
V_0^*(s)-V_0^\pi(s)
\le
T\bar\beta
=
T\parens{\beta+2\rho T R_{\max}} .
\]
Here Theorem~\ref{thm:master-graph-concentration} gives $\beta=2\alpha_0\le2T\lambda$ for one planner call.
To make the accumulated deterministic gap at most $\epsilon/2$, choose
$2T\lambda\le\epsilon/(2T)$, equivalently $\lambda=\Theta(\epsilon/T^2)$.
To make the accumulated failure contribution at most $\epsilon/2$, take
$\rho=O(\epsilon/(R_{\max}T^2))$.
Substituting this $\lambda$ into the explicit width calculation above, with $\delta$ replaced by the per-call budget $\rho$, yields
\[
C
=
\widetilde{O}\!\left(
\frac{R_{\max}^2(d_\infty^{\max})^2T^6}{\epsilon^2}
\right),
\qquad
N_{\mathrm{GSS}}
=
\widetilde{O}\!\left(
\frac{R_{\max}^2(d_\infty^{\max})^2T^7}{\epsilon^2}
\right).
\]

For discounted infinite-horizon policy value, let
$V_{\max}=R_{\max}/(1-\gamma)$ and truncate the discounted problem at
$T=\widetilde{O}(1/(1-\gamma))$ so that the tail satisfies
$\gamma^T V_{\max}=O(\epsilon(1-\gamma))$.
The local envelope becomes
$B_\gamma\le R_{\max}+\gamma V_{\max}=V_{\max}$.
Choosing
$\lambda=\Theta(\epsilon(1-\gamma)^2)$ gives
$\alpha_0\le\lambda/(1-\gamma)=O(\epsilon(1-\gamma))$.
The discounted part of Theorem~\ref{thm:planner-to-policy-conversion} then requires a per-call failure budget
$\rho=O(\epsilon(1-\gamma)/V_{\max})$ so that
$(2\alpha_0+2\gamma^T V_{\max}+2\rho V_{\max})/(1-\gamma)=O(\epsilon)$.
Substituting $B_\gamma=O(R_{\max}/(1-\gamma))$ and
$\lambda=\Theta(\epsilon(1-\gamma)^2)$ into
$A_\lambda=(B_\gamma d_\infty^{\max}/\lambda)^2$ gives
\[
C
=
\widetilde{O}\!\left(
\frac{R_{\max}^2(d_\infty^{\max})^2}{\epsilon^2(1-\gamma)^6}
\right).
\]
Multiplying by the truncation depth
$T=\widetilde{O}(1/(1-\gamma))$ yields
\[
N_{\mathrm{GSS}}
=
\widetilde{O}\!\left(
\frac{R_{\max}^2(d_\infty^{\max})^2}{\epsilon^2(1-\gamma)^7}
\right).
\]

\subsection{Proof of Corollary~\ref{crl:joint-schedule}}
\label{app:proof-joint-schedule}

\begin{proof}
Choose strictly positive deterministic budgets $\eta_t^{\mathrm{est}}$, $\eta_t^{\mathrm{moll}}$, and $\eta_t^{\mathrm{act}}$ so that the recursion
\[
\bar\alpha_T=\kappa_T^{\mathrm{tail}},
\qquad
\bar\alpha_t
=
\gamma\bar\alpha_{t+1}
+
\eta_t^{\mathrm{est}}
+
\eta_t^{\mathrm{moll}}
+
\eta_t^{\mathrm{act}},
\qquad
t=T-1,\dots,0,
\]
satisfies $2\bar\alpha_0\le\varepsilon_{\mathrm{root}}$.
Equivalently,
\[
\bar\alpha_0
=
\gamma^T\kappa_T^{\mathrm{tail}}
+
\sum_{t=0}^{T-1}
\gamma^t
\left(
\eta_t^{\mathrm{est}}
+
\eta_t^{\mathrm{moll}}
+
\eta_t^{\mathrm{act}}
\right)
\le
\varepsilon_{\mathrm{root}}/2.
\]
Such budgets exist whenever the terminal estimate is exact, or more generally once the terminal tail accuracy has been made small enough to leave positive slack.
Choose positive failure allocations $\zeta_T^{\mathrm{tail}}$, $\zeta_t^{\mathrm{est}}$, and $\zeta_t^{\mathrm{act}}$ such that
\[
\zeta_T^{\mathrm{tail}}
+
\sum_{t=0}^{T-1}
\parens{\zeta_t^{\mathrm{est}}+\zeta_t^{\mathrm{act}}}
\le
\delta.
\]
In the non-exact terminal case, the terminal estimator is understood to be chosen after the final width $C_T$ is fixed so that
$C_T\Delta_T^{\mathrm{tail}}\le\zeta_T^{\mathrm{tail}}$ for the selected $\kappa_T^{\mathrm{tail}}$.
In the exact terminal case, take $\kappa_T^{\mathrm{tail}}=\Delta_T^{\mathrm{tail}}=0$.

Construct the schedule forward in $t$, starting from $C_0=1$.
At depth $t<T$, assume $C_t$ has already been fixed.
First choose $\tau_t$ so that
\[
L_t^s m_{\beta_t^s,t}(\tau_t)\le\eta_t^{\mathrm{moll}},
\]
which is possible by the assumed vanishing of the smoothing bias.
Next choose $\varepsilon_t\in(0,\bar\varepsilon_t]$ so that
\[
L_t^a\varepsilon_t^{\beta_t^a}\le\eta_t^{\mathrm{act}}.
\]
For instance, when $L_t^a>0$ it is enough to take
$\varepsilon_t\le\min\{\bar\varepsilon_t,(\eta_t^{\mathrm{act}}/L_t^a)^{1/\beta_t^a}\}$; when $L_t^a=0$, any $\varepsilon_t\in(0,\bar\varepsilon_t]$ suffices.
Since Assumption~\ref{ass:small-ball} gives $m_t(\varepsilon_t)>0$, choose
\[
K_t
\ge
\left\lceil
\frac{\log(C_t/\zeta_t^{\mathrm{act}})}
     {m_t(\varepsilon_t)}
\right\rceil .
\]
Then, using $1-x\le e^{-x}$ for $x\ge0$,
\[
\begin{aligned}
K_t m_t(\varepsilon_t)
&\ge
\log(C_t/\zeta_t^{\mathrm{act}}),
\\
C_t\parens{1-m_t(\varepsilon_t)}^{K_t}
&\le
C_t\exp\!\left(-K_t m_t(\varepsilon_t)\right)
\le
C_t\exp\!\left(-\log(C_t/\zeta_t^{\mathrm{act}})\right)
=
\zeta_t^{\mathrm{act}}.
\end{aligned}
\]
Set
\[
\rho_t^{\mathrm{est}}
\bydef
\frac{\zeta_t^{\mathrm{est}}}{C_tK_t}.
\]
Finally, use the assumed uniform controllability of the smoothed backup to choose $C_{t+1}$ and the local SNIS radius $\lambda_t$ so that
\[
\lambda_t\le\eta_t^{\mathrm{est}},
\qquad
\delta_{\loc,t}(\lambda_t,C_{t+1})
\le
\rho_t^{\mathrm{est}},
\]
or equivalently
$C_tK_t\,\delta_{\loc,t}(\lambda_t,C_{t+1})\le\zeta_t^{\mathrm{est}}$.
The same construction can use a common nonroot state width under a bounded-$d_\infty$ condition for the fixed smoothed targets.
Suppose that, uniformly over realized node-action pairs,
\[
d_\infty(p_{t,\tau_t}(\cdot\mid s_t^i,a_t^{i,k})\|q_t^s)\le D_{\infty,t}
\]
and that the corresponding Bellman targets are bounded by $f_{\max,t}$.
Choose fixed local accuracy levels $0<\lambda_t\le\eta_t^{\mathrm{est}}$ and, for a candidate common width $C$, define
\[
C_t(C)
\bydef
\begin{cases}
1, & t=0,\\
C, & t\ge1,
\end{cases}
\qquad
K_t(C)
\bydef
\left\lceil
\frac{\log(C_t(C)/\zeta_t^{\mathrm{act}})}
     {m_t(\varepsilon_t)}
\right\rceil,
\qquad
\rho_t^{\mathrm{est}}(C)
\bydef
\frac{\zeta_t^{\mathrm{est}}}{C_t(C)K_t(C)}.
\]
Let
\[
A_t
\bydef
\left(
\frac{f_{\max,t}D_{\infty,t}}{\lambda_t}
\right)^2.
\]
Choose a common integer $C$ satisfying
\[
C
\ge
\max_{t<T}
4A_t
\log\!\left(\frac{3}{\rho_t^{\mathrm{est}}(C)}\right).
\]
Such a $C$ exists because $K_t(C)$ grows logarithmically in $C$, so the right-hand side grows only logarithmically in $C$, while the left-hand side grows linearly.
Set the actual widths and action budgets by $C_t=C_t(C)$ and $K_t=K_t(C)$.
Since $\rho_t^{\mathrm{est}}(C)<1$, the same lower bound gives $C\ge4A_t$ and hence
\[
\frac{\lambda_t}{f_{\max,t}D_{\infty,t}}
-
\frac{1}{\sqrt C}
=
\frac{1}{\sqrt{A_t}}
-
\frac{1}{\sqrt C}
\ge
\frac{1}{2\sqrt{A_t}}.
\]
Thus the positivity condition in the bounded-$d_\infty$ SNIS concentration bound~\eqref{eq:lim-dinf-concentration} holds, and for $C_{t+1}=C$,
\[
\delta_{\loc,t}(\lambda_t,C)
\le
3\exp\!\left(
-C
\left(
\frac{\lambda_t}{f_{\max,t}D_{\infty,t}}
-
\frac{1}{\sqrt{C}}
\right)^2
\right)
\]
Therefore
\[
\delta_{\loc,t}(\lambda_t,C)
\le
3\exp\!\left(-\frac{C}{4A_t}\right)
\le
\rho_t^{\mathrm{est}}(C),
\]
and hence
$C_t(C)K_t(C)\,\delta_{\loc,t}(\lambda_t,C)\le\zeta_t^{\mathrm{est}}$
at every depth.

With these choices, Corollaries~\ref{crl:sampled-action-graph-concentration} and~\ref{crl:smoothed-transition-graph-concentration} give
\[
\kappa_t
\le
\eta_t^{\mathrm{est}}
+
\eta_t^{\mathrm{moll}}
+
\eta_t^{\mathrm{act}},
\qquad
\Delta_t
\le
\zeta_t^{\mathrm{est}}
+
\zeta_t^{\mathrm{act}}.
\]
Therefore the recursion defining $\alpha_t$ in Theorem~\ref{thm:master-graph-concentration} gives
$\alpha_t\le\bar\alpha_t$ for every depth, while the total failure probability is at most $\delta$.
The root-action conclusion of Theorem~\ref{thm:master-graph-concentration} then yields
\[
Q_0^*(s_0,a_0^*)-Q_0^*(s_0,\hat a_0)
\le
2\alpha_0
\le
2\bar\alpha_0
\le
\varepsilon_{\mathrm{root}}
\]
with probability at least $1-\delta$, for every $a_0^*\in\argmaxset_0(s_0)$.
\end{proof}

\section{Experimental Details}
\label{app:experiments}

All experiments use the fixed-shape JAX GSS planner described in Section~\ref{sec:experiments}.
The JAX GSS experiments were run on a GPU machine equipped with an NVIDIA GeForce RTX 2080 Ti with 11 GB VRAM.
The Julia DPW/VPW baselines were run on a CPU machine with an AMD Ryzen 7 3700X 8-core/16-thread processor at 3.60 GHz.
We record action-mixture counts in the order primitive/rollout/noisy/uniform.
The primitive component is a domain-specific list of fixed actions, the rollout component repeats the guide-policy action, the noisy component adds Gaussian perturbations scaled by the action range, and the uniform component samples from the action box.

\subsection{GSS Configuration}

We use three state-proposal settings across the experiments.
In Rotating DDI, parent-action components are scored by a sampled one-step transition target plus rollout tail, the top fraction is retained, and successor states are sampled from the retained transition-moment mixture.
Lunar Lander uses the same top-fraction scoring, but the next-state proposal is one diagonal Gaussian fitted to the sampled one-step successors.
Reacher uses an unpruned transition-moment mixture, without top-fraction pruning.
When the direct-sample option is enabled, the dense SNIS backup also includes the direct sampled target as an additional backup sample.

\GSSScore{Graph backups} use either an exact-moment density, when the target transition law is the domain's diagonal Gaussian transition, or a single-point density, when the target transition law is the single-point linearization of the smoothed transition density obtained by adding a positive diagonal bandwidth $\tau$ to the simulator transition moments.
All GSS runs use rollout tail values; rollout tails use the same rollout policy as the action proposal.

\subsection{DPW and VPW Baselines}

The Julia baselines use \texttt{MCTS.DPWSolver} with parameters
\[
(c_{\mathrm{uct}}, k_a, \alpha_a, k_s, \alpha_s)
=
(60.0, 1.45, 0.60, 0.1, 0.30).
\]
The rollout value estimator simulates the named rollout policy under the domain transition.
For DPW, new actions are generated by \texttt{policy\_first\_random}: the first widened action is the guide-policy action, and later widened actions are sampled by the default random action generator.
For VPW, the same policy-first wrapper is used, but later widened actions are generated by a VOO action generator.
In the DDI VPW runs, VOO uses global exploration probability $0.9$ and a diagonal Gaussian local sampler with standard deviation $0.5$ in each action dimension.
In the Lunar VPW implementation, the local covariance is $\operatorname{diag}(0.2^2,0.5^2,0.05^2)$.

\subsection{4D D-Double-Integrator}

The DDI benchmark uses the 4D double-integrator domain with $D=4$.
States are ordered as $s=(x,v)$ with $x,v\in\mathbb{R}^{4}$, and actions are accelerations $a\in[-2,2]^4$.
The initial state is $(\mathbf{1}_4,\mathbf{0}_4)$ and the goal is the origin.
The rotating-frame experiments use
\[
\Delta t=0.2,\qquad
\sigma_x=\sigma_v=0.02,\qquad
\gamma=0.99.
\]
We also set the drag substep count to 32 and enable the rotating-frame coupling.
The implementation uses one Euler step when drag and nonlinear coupling are inactive, and 32 Euler substeps when the rotating-frame coupling is active; in this sweep, $\alpha=0$ is the single-step nominal DDI case and nonzero $\alpha$ uses the 32-substep rotating-frame integration.
Within each microstep,
\[
x^{+}=x+\Delta t_{\mathrm{sub}} v,\qquad
v^{+}=v+\Delta t_{\mathrm{sub}} a_{\alpha},
\]
where
\[
a_{\alpha}=a-2\alpha Jv+\alpha^2 x
\]
and $J$ rotates each adjacent coordinate pair by $90^\circ$.
The sampled transition then adds independent Gaussian noise with standard deviation $0.02$ to each position and velocity coordinate.
Episodes terminate when any coordinate leaves $[-2,2]$ or after 50 steps.
The reward uses the current state and clipped action,
\[
r(s,a,s')=
1-\frac{\lVert x\rVert_2^2+\lVert a\rVert_2^2}{8D},
\]
with no velocity cost.

The rotating-frame GSS sweep uses $\alpha\in\{0.0,0.3,0.6,0.9,1.2,1.5\}$, 100 seeds, a 50-step planner horizon, and a 50-step rollout cap.
GSS uses action budget 64, state width 512, action mixture $0/1/31/32$ (one LQR guide action, 31 noisy-LQR actions, and 32 uniform actions), and an LQR guide and rollout policy.
Graph backups use exact-moment SNIS densities with rollout tail values, and the state proposal is the top-fraction transition-moment mixture described above, with top fraction $0.95$, fitted from transition moments and without proposal inflation, using direct rollout targets at all depths.
The companion Julia DPW/VPW rotating-frame runs use the same DDI dynamics, 100 seeds, the same alpha grid, and per-action wall-clock budgets in seconds
\[
0.01,\ 0.0316227766,\ 0.1,\ 0.3162277660,\ 1.0.
\]
Table~\ref{tab:ddi-budget-sweep} reports the full DPW/VPW budget sweep against the GSS row used in Figure~\ref{fig:ddi_results}.

\begin{table}[H]
\centering
\scriptsize
\setlength{\tabcolsep}{3pt}
\caption{Rotating DDI DPW/VPW budget sweep. Entries are mean undiscounted return $\pm2$ standard errors over 100 seeds. The GSS column reports the JAX GSS row for the same $\alpha$ and is repeated for each planner.}
\label{tab:ddi-budget-sweep}
\resizebox{\textwidth}{!}{%
\begin{tabular}{rllccccc}
\toprule
$\alpha$ & Planner & GSS & 0.01s & 0.0316s & 0.1s & 0.316s & 1.0s \\
\midrule
0.0 & DPW & 48.194 $\pm$ 0.035 & 48.172 $\pm$ 0.036 & 48.006 $\pm$ 0.043 & 47.850 $\pm$ 0.045 & 47.992 $\pm$ 0.039 & 47.998 $\pm$ 0.032 \\
0.0 & VPW & 48.194 $\pm$ 0.035 & 48.113 $\pm$ 0.034 & 47.972 $\pm$ 0.031 & 47.939 $\pm$ 0.029 & 48.081 $\pm$ 0.025 & 48.119 $\pm$ 0.023 \\
0.3 & DPW & 48.085 $\pm$ 0.034 & 48.148 $\pm$ 0.037 & 47.885 $\pm$ 0.042 & 47.750 $\pm$ 0.039 & 47.823 $\pm$ 0.038 & 47.920 $\pm$ 0.033 \\
0.3 & VPW & 48.085 $\pm$ 0.034 & 48.009 $\pm$ 0.031 & 47.905 $\pm$ 0.031 & 47.836 $\pm$ 0.028 & 47.956 $\pm$ 0.023 & 48.064 $\pm$ 0.022 \\
0.6 & DPW & 47.586 $\pm$ 0.045 & 47.362 $\pm$ 0.058 & 47.215 $\pm$ 0.048 & 47.199 $\pm$ 0.051 & 47.354 $\pm$ 0.041 & 47.509 $\pm$ 0.035 \\
0.6 & VPW & 47.586 $\pm$ 0.045 & 47.333 $\pm$ 0.046 & 47.299 $\pm$ 0.043 & 47.348 $\pm$ 0.038 & 47.557 $\pm$ 0.030 & 47.694 $\pm$ 0.027 \\
0.9 & DPW & 46.796 $\pm$ 0.049 & 45.944 $\pm$ 0.076 & 46.090 $\pm$ 0.064 & 46.118 $\pm$ 0.074 & 46.360 $\pm$ 0.053 & 46.546 $\pm$ 0.044 \\
0.9 & VPW & 46.796 $\pm$ 0.049 & 46.047 $\pm$ 0.082 & 46.241 $\pm$ 0.049 & 46.390 $\pm$ 0.051 & 46.649 $\pm$ 0.042 & 46.825 $\pm$ 0.034 \\
1.2 & DPW & 44.138 $\pm$ 0.839 & 42.266 $\pm$ 1.250 & 43.345 $\pm$ 0.767 & 44.038 $\pm$ 0.081 & 44.391 $\pm$ 0.084 & 44.694 $\pm$ 0.074 \\
1.2 & VPW & 44.138 $\pm$ 0.839 & 42.892 $\pm$ 1.006 & 43.343 $\pm$ 1.055 & 44.478 $\pm$ 0.071 & 44.761 $\pm$ 0.067 & 45.109 $\pm$ 0.055 \\
1.5 & DPW & 37.258 $\pm$ 2.286 & 26.072 $\pm$ 3.492 & 32.929 $\pm$ 2.960 & 34.781 $\pm$ 2.764 & 38.134 $\pm$ 2.151 & 39.242 $\pm$ 1.970 \\
1.5 & VPW & 37.258 $\pm$ 2.286 & 29.664 $\pm$ 3.290 & 34.332 $\pm$ 2.798 & 36.641 $\pm$ 2.519 & 36.387 $\pm$ 2.640 & 38.262 $\pm$ 2.390 \\
\bottomrule
\end{tabular}%
}
\end{table}

\subsection{Lunar Lander}

The Lunar Lander domain follows the benchmark used by \citet{Mern21aaai} and \citet{Lim21cdc}.
The state is $s=(x,z,\theta,v_x,v_z,\omega)$.
The action is $(F_x,T,\delta)$ with
\[
F_x\in[-5,5],\qquad T\in[0,15],\qquad \delta\in[-1,1].
\]
The parameters are mass $m=1$, inertia $I=10$, gravity $9.0$, and $\Delta t=0.4$.
The transition computes
\[
\begin{aligned}
f_x &= \cos(\theta)F_x-\sin(\theta)T, &
f_z &= \cos(\theta)T+\sin(\theta)F_x,\\
\tau &= -\delta F_x, &
\dot\omega &= \tau/I,\\
v_x' &= v_x+\Delta t\, f_x/m+\epsilon_1\,0.1, &
v_z' &= v_z+\Delta t\,(f_z/m-9.0)+\epsilon_2\,0.1,\\
\omega' &= \omega+\Delta t\,\dot\omega+\epsilon_3\,0.01, &
x' &= x+\Delta t\,v_x,\\
z' &= z+\Delta t\,v_z, &
\theta' &= \theta+\Delta t\,\omega,
\end{aligned}
\]
where $\epsilon_1,\epsilon_2,\epsilon_3$ are independent standard normal variables.
The transition is therefore low-rank in the six-dimensional state, and GSS uses the single-point smoothed density.
The default initial mean is $(0,50,0,0,-10,0)$ with diagonal initialization standard deviations
\[
\sqrt{(0.1,0.1,0.01,0.1,0.1,0.01)}.
\]

Termination occurs when $\lvert x\rvert\ge 15$, $\lvert\theta\rvert\ge0.5$, or $z\le1$.
The reward is
\[
r(s,a,s')=
\begin{cases}
-1000, & \lvert x'\rvert\ge 15\ \text{or}\ \lvert\theta'\rvert\ge0.5,\\
100-\lvert x'\rvert-(v_z')^2, & z'\le1,\\
-1, & \text{otherwise}.
\end{cases}
\]
The discount is $\gamma=0.99$.

The Lunar Lander experiments use 100 seeds and a brake-near-ground policy for both guide and rollout.
Graph backups use single-point smoothed densities with rollout tail values and smoothing bandwidth $(0.02, 0.02, 0.005, 0.02, 0.02, 0.002)$.
The state proposal is the top-fraction transition-moment Gaussian described above, fitted to the sampled one-step successors, with top fraction $0.8$ and inflated by a factor $1.1$, and the direct sampled target is added to the backup.
The planner horizon is 12, the rollout cap and closed-loop cap are 24, and the GSS action and state sweep uses the following configurations:
\[
\begin{array}{c|c|c|c}
\text{label} & \text{action budget} & \text{state width} & \text{primitive/rollout/noisy/uniform}\\
\hline
\texttt{xs} & 10 & 384 & 2/1/5/2\\
\texttt{small} & 16 & 512 & 4/1/6/5\\
\texttt{medium} & 24 & 1024 & 6/1/8/9\\
\texttt{large} & 40 & 1536 & 6/1/16/17\\
\texttt{xl} & 50 & 2048 & 6/1/21/22
\end{array}
\]
The Julia Lunar tree curves use the same brake-near-ground guide and rollout policy, 100 seeds, a 35-step closed-loop cap, and per-action budgets
\[
0.0001,\ 0.000316228,\ 0.001,\ 0.00316228,\ 0.01,\ 0.0316228,\ 0.1,\ 0.316228,\ 1.0
\]
seconds.
DPW uses \texttt{policy\_first\_random} action generation, and VPW uses the policy-first VOO wrapper described above.
Table~\ref{tab:lunar-planner-performance} reports the Lunar planner-performance rows corresponding to the plotted comparison.

\begin{table}[H]
\centering
\small
\setlength{\tabcolsep}{5pt}
\caption{Lunar Lander planner performance from Figure~\ref{fig:lunar_results}. Return entries are discounted mean episode return $\pm$ SE over 100 seeds.}
\label{tab:lunar-planner-performance}
\begin{tabular}{llcrc}
\toprule
Planner & Config/budget & Action/state & Time (s) & Return \\
\midrule
GSS & xs & 10/384 & 0.00962 & 30.763 $\pm$ 20.144 \\
GSS & small & 16/512 & 0.0114 & 65.407 $\pm$ 9.904 \\
GSS & medium & 24/1024 & 0.0287 & 75.570 $\pm$ 1.407 \\
GSS & large & 40/1536 & 0.0529 & 78.641 $\pm$ 0.108 \\
GSS & xl & 50/2048 & 0.1065 & 78.618 $\pm$ 0.180 \\
DPW & 0.01s & -- & 0.0100 & 73.582 $\pm$ 0.208 \\
DPW & 0.0316s & -- & 0.0316 & 73.219 $\pm$ 0.257 \\
DPW & 0.1s & -- & 0.100 & 71.908 $\pm$ 0.316 \\
DPW & 0.316s & -- & 0.316 & 70.125 $\pm$ 0.227 \\
DPW & 1.0s & -- & 1.000 & 68.747 $\pm$ 0.251 \\
VPW & 0.01s & -- & 0.0100 & 74.402 $\pm$ 0.353 \\
VPW & 0.0316s & -- & 0.0316 & 73.795 $\pm$ 0.296 \\
VPW & 0.1s & -- & 0.100 & 73.391 $\pm$ 0.161 \\
VPW & 0.316s & -- & 0.316 & 71.161 $\pm$ 0.329 \\
VPW & 1.0s & -- & 1.000 & 69.835 $\pm$ 0.268 \\
\bottomrule
\end{tabular}
\end{table}

\subsection{Reacher}

The Reacher benchmark is implemented in JAX~\citep{Bradbury18github} using the MJX interface to MuJoCo~\citep{Todorov12iros}, with a Gymnasium Reacher-v5-style model and observation/action convention~\citep{Towers24arxiv}.
The MJX model has $n_q=4$, $n_v=4$, and $n_u=2$.
The planner state is the 10D observation
\[
s=\bigl(\cos q_0,\cos q_1,\sin q_0,\sin q_1,g_x,g_y,\dot q_0,\dot q_1,d_x,d_y\bigr),
\]
where $g=(g_x,g_y)$ is the target coordinate and $d$ is the fingertip-target displacement.
Actions are 2D torques clipped to the XML actuator control range $[-1,1]^2$.
The transition converts the 10D observation back to MJX state variables, applies the clipped control for frame skip two with $\Delta t=0.02$, and maps the resulting MJX data back to the same 10D observation.
There is no transition noise and no early terminal condition.
Initial states sample the two arm angles uniformly from $[-0.1,0.1]$, arm angular velocities from $[-0.005,0.005]$, and the target uniformly in a disk of radius $0.2$.
The reward is
\[
r(s,a,s')=-\lVert d'\rVert_2-\lVert a\rVert_2^2,
\]
and the discount is $\gamma=0.99$.

The default controller used for the PD baseline, guide policy, and rollout policy is a clipped task-space PD-style controller.
From the observation, it recovers $q=(q_0,q_1)$ by applying $\operatorname{atan2}$ to the sine and cosine coordinates, takes $\dot q=(\dot q_0,\dot q_1)$ from the velocity entries, and uses $d=(d_x,d_y)$ as the fingertip-target displacement.
With link lengths $\ell_1=\ell_2=0.1$, let
\[
J(q)=
\begin{pmatrix}
-\ell_1\sin q_0-\ell_2\sin(q_0+q_1) & -\ell_2\sin(q_0+q_1)\\
\ell_1\cos q_0+\ell_2\cos(q_0+q_1) & \ell_2\cos(q_0+q_1)
\end{pmatrix}
\]
be the planar fingertip Jacobian.
The implemented controller is
\[
a_{\mathrm{PD}}(s)=\operatorname{clip}_{[-1,1]^2}
\bigl(-12\,J(q)^\top d-0.35\,\dot q\bigr).
\]
Thus the proportional term acts on fingertip-target displacement through the Jacobian transpose, and the derivative term damps the two arm angular velocities.

Because the Reacher transition is deterministic in the ambient 10D observation representation, the domain only permits the single-point smoothed density with positive bandwidth.
The Reacher GSS sweeps use dense SNIS backups with rollout tail values: graph leaves are valued by a single rollout under the default rollout policy, capped by the 50-step rollout/episode horizon.
The state proposal is an unpruned transition-moment mixture fitted from transition moments, without proposal inflation; rollout-based proposal pruning and direct sampled backup targets are not used.
The action guide is the default controller, and each action set contains one guide action plus noisy-guide and uniform samples.
The bandwidth vector is
\[
(0.05,0.05,0.05,0.05,0.02,0.02,0.10,0.10,0.03,0.03).
\]
The closed-loop cap and rollout cap are 50 steps, and all Reacher GSS budget-scaling experiments use planner horizon 20 and 20 seeds.
The reported sweep varies state width and action budget over
\begin{gather*}
(256,24),\ (512,24),\ (512,32),\ (512,48),\ (768,32),\\ (768,48),\ (1024,48),\ (1280,48),\ (1024,64),\ (1536,48), \\
(1280,64),\ (1792,48),\ (1536,64),\ (2048,48),\ (2048,64),
\end{gather*}
where each pair is (state width, action budget).
The action mixtures, reported as rollout/noisy/uniform, are $1/3/20$ for action budget 24, $1/4/27$ for action budget 32, $1/6/41$ for action budget 48, and $1/8/55$ for action budget 64.

Table~\ref{tab:reacher-budget-sweep} reports the Reacher budget-scaling results used in Figure~\ref{fig:reacher_results}.
The default PD-style controller baseline has undiscounted return $-9.358\pm1.232$ SE and no planner call.

\begin{table}[H]
\centering
\small
\setlength{\tabcolsep}{5pt}
\caption{Reacher GSS budget-scaling rows. Time is mean planning time per decision in seconds; return entries are undiscounted 50-step return $\pm$ SE.}
\label{tab:reacher-budget-sweep}
\begin{tabular}{rrrc}
\toprule
State width & Action budget & Time (s) & Return \\
\midrule
256 & 24 & 0.40 & -8.006 $\pm$ 0.862 \\
512 & 24 & 0.54 & -7.552 $\pm$ 0.777 \\
512 & 32 & 0.60 & -7.549 $\pm$ 0.679 \\
512 & 48 & 0.80 & -7.097 $\pm$ 0.638 \\
768 & 32 & 0.83 & -6.982 $\pm$ 0.639 \\
768 & 48 & 1.21 & -6.785 $\pm$ 0.578 \\
1024 & 48 & 1.70 & -6.631 $\pm$ 0.360 \\
1280 & 48 & 2.27 & -6.308 $\pm$ 0.454 \\
1024 & 64 & 2.40 & -6.240 $\pm$ 0.459 \\
1536 & 48 & 2.93 & -6.302 $\pm$ 0.462 \\
1280 & 64 & 3.22 & -6.384 $\pm$ 0.513 \\
1792 & 48 & 3.49 & -6.267 $\pm$ 0.430 \\
1536 & 64 & 4.10 & -6.280 $\pm$ 0.452 \\
2048 & 48 & 4.23 & -6.123 $\pm$ 0.398 \\
2048 & 64 & 5.69 & -6.144 $\pm$ 0.427 \\
\bottomrule
\end{tabular}
\end{table}

\ifthenelse{\boolean{anonymous}}{\clearpage\input{neurips_paper_checklist.tex}}{}

\end{document}